\newtheorem{theorem}{Theorem}
\newtheorem*{theorem*}{Theorem}
\newtheorem*{remark}{Remark}
\newtheorem{assumption}{Assumption}
\newcommand{\cmark}{\ding{51}}%
\newcommand{\xmark}{\ding{55}}%
\DeclarePairedDelimiterX{\infdivx}[2]{(}{)}{%
  #1\;\delimsize\|\;#2%
}
\DeclarePairedDelimiter{\norm}{\lVert}{\rVert}
\DeclareMathOperator*{\argmax}{arg\,max}
\DeclareMathOperator*{\argmin}{arg\,min}
\DeclareRobustCommand\onedot{\futurelet\@let@token\@onedot}
\def\@onedot{\ifx\@let@token.\else.\null\fi\xspace}
\def\eg{\emph{e.g}\onedot} 
\def\ie{\emph{i.e}\onedot} 
\def\etc{\emph{etc}\onedot} 
\def\wrt{w.r.t\onedot}
\def\a{\bm{\alpha}}
\def\ahat{\hat{\bm{\alpha}}}
\def\astar{\bm{\alpha}^\ast}
\def\adagger{\bm{\alpha}^\dagger}
\def\Sbar{\bar{S}}
\def\KLa{KL_{\a}}
\def\KLastar{KL_{\astar}}
\def\KLahat{KL_{\ahat}}
\def\KLadagger{KL_{\adagger}}
\def\Sint{\int_{\theta \in S}}
\def\Sbarint{\int_{\theta \in \Sbar}}
\title{Auxiliary Task Reweighting for \\ Minimum-data Learning}
\author{%
  Baifeng Shi \\
  Peking University\\
  \texttt{bfshi@pku.edu.cn} \\
  \And
  Judy Hoffman \\
  Georgia Institute of Technology \\
  \texttt{judy@gatech.edu} \\
  \AND
  Kate Saenko \\
  Boston University \& MIT-IBM Watson AI Lab\\
  \texttt{saenko@bu.edu} \\
  \AND
  Trevor Darrell, Huijuan Xu \\
  University of California, Berkeley \\
  \texttt{\{trevor, huijuan\}@eecs.berkeley.edu} \\
}
\begin{document}

\maketitle

\begin{abstract}
  
  Supervised learning requires a large amount of training data, limiting its application where labeled data is scarce.
  To compensate for data scarcity, one possible method is to utilize auxiliary tasks to provide additional supervision for the main task.
  Assigning and optimizing the importance weights for different auxiliary tasks remains an crucial and largely understudied research question.
  In this work, we propose a method to automatically reweight auxiliary tasks in order to reduce the data requirement on the main task.
  Specifically, we formulate the weighted likelihood function of auxiliary tasks as a surrogate prior for the main task.
  By adjusting the auxiliary task weights to minimize the divergence between the surrogate prior and the true prior of the main task, we obtain a more accurate prior estimation, achieving the goal of minimizing the required amount of training data for the main task and avoiding a costly grid search. 
  In multiple experimental settings (\eg semi-supervised learning, multi-label classification), we demonstrate that our algorithm can effectively utilize limited labeled data of the main task with the benefit of auxiliary tasks compared with previous task reweighting methods. 
  We also show that under extreme cases with only a few extra examples (\eg few-shot domain adaptation), our algorithm results in significant improvement over the baseline. Our code and video is available at \url{https://sites.google.com/view/auxiliary-task-reweighting}.
\end{abstract}


\section{Introduction}


Supervised deep learning methods typically require an enormous amount of labeled data, which for many applications, is difficult, time-consuming, expensive, or even impossible to collect. As a result, there is a significant amount of research effort devoted to efficient learning with limited labeled data, including semi-supervised learning~\cite{tarvainen2017mean,oliver2018realistic}, transfer learning~\cite{tzeng2015simultaneous}, few-shot learning~\cite{chen2019closer}, domain adaptation~\cite{tzeng2017adversarial}, and representation learning~\cite{oord2018representation}.


Among these different approaches, auxiliary tasks are widely used to alleviate the lack of data by providing additional supervision, \ie using the same data or auxiliary data for a different learning task during the training procedure. Auxiliary tasks are usually collected from related tasks or domains where there is abundant data~\cite{tzeng2017adversarial}, or manually designed to fit the latent data structure~\cite{zhai2019s4l,shelhamer2016loss}. Training with auxiliary tasks has been shown to achieve better generalization~\cite{ando2005framework}, and is therefore widely used in many applications, \eg semi-supervised learning~\cite{zhai2019s4l}, self-supervised learning~\cite{oord2018representation}, transfer learning~\cite{tzeng2015simultaneous}, and reinforcement learning~\cite{jaderberg2016reinforcement}.

Usually both the main task and auxiliary task are jointly trained, but only the main task's performance is important for the downstream goals. The auxiliary tasks should be able to reduce the amount of labeled data required to achieve a given performance for the main task. However, this has proven to be a difficult selection problem as certain seemingly related auxiliary tasks yield little or no improvement for the main task. One simple task selection strategy is to compare the main task performance when training with each auxiliary task separately~\cite{zamir2018taskonomy}. However, this requires an exhaustive enumeration of all candidate tasks, which is prohibitively expensive when the candidate pool is large. Furthermore, individual tasks may behave unexpectedly when combined together for final training. Another strategy is training all auxiliary tasks together in a single pass and using an evaluation technique or algorithm to automatically determine the importance weight for each task. There are several works along this direction~\cite{chen2017gradnorm,lin2019adaptive,bakker2003task,du2018adapting}, but they either only filter out unrelated tasks without further differentiating among related ones, or have a focused motivation (\eg faster training) limiting their general use.


In this work, we propose a method to adaptively reweight auxiliary tasks on the fly during joint training so that the data requirement on the main task is minimized. We start from a key insight: \emph{we can reduce the data requirement by choosing a high-quality prior}. 
Then we formulate the parameter distribution induced by the auxiliary tasks' likelihood as a surrogate prior for the main task.
By adjusting the auxiliary task weights, the divergence between the surrogate prior and the true prior of the main task is minimized. In this way, the data requirement on the main task is reduced under high quality surrogate prior. Specifically, due to the fact that minimizing the divergence is intractable, we turn the optimization problem into minimizing the distance between gradients of the main loss and the auxiliary losses, which allows us to design a practical, light-weight algorithm.
We show in various experimental settings that our method can make better use of labeled data and effectively reduce the data requirement for the main task. Surprisingly, we find that very little labeled data (\eg 1 image per class) is enough for our algorithm to bring a substantial improvement over unsupervised and few-shot baselines. 


\begin{figure}[t]
    \centering
    \includegraphics[width=0.85\columnwidth]{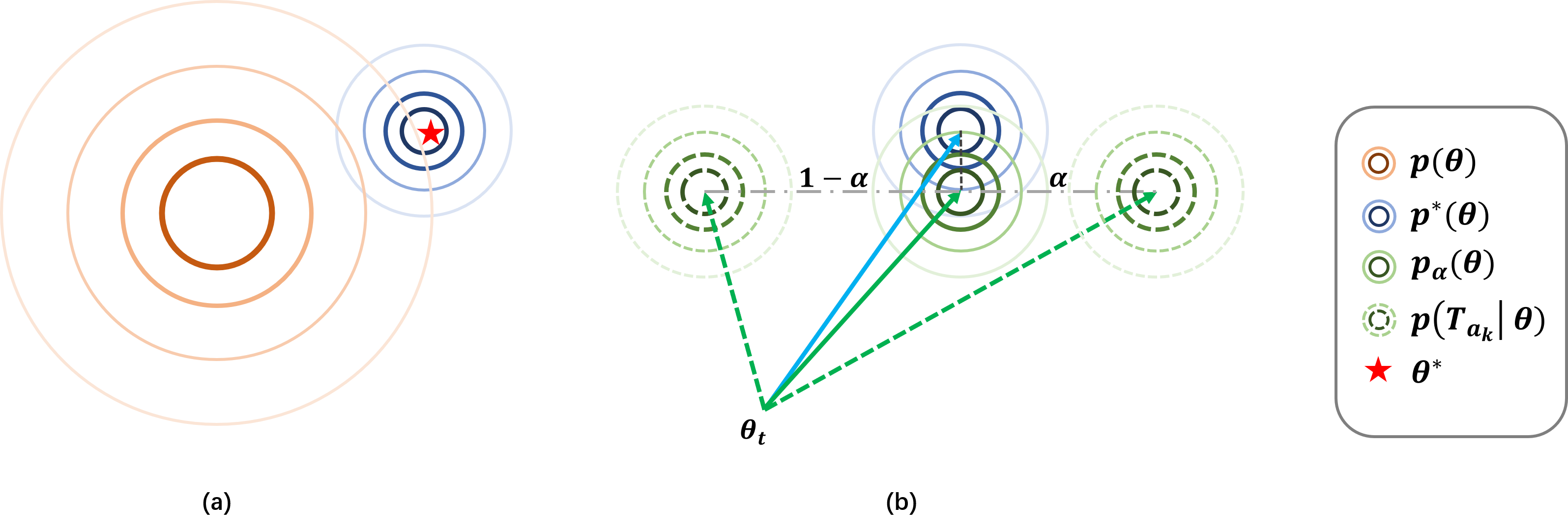}
    \caption{Learning with minimal data through auxiliary task reweighting. (a) An ordinary prior $p(\theta)$ over model parameters contains little information about the true prior $p^\ast(\theta)$ and the optimal parameter $\theta^\ast$. (b) Through a weighted combination of distributions induced by data likelihood $p(\mathcal{T}_{a_k} | \theta)$ of different auxiliary tasks, we find the optimal surrogate prior $p_\alpha (\theta)$ which is closest to the true prior. 
    }
    \label{fig:overview}
    \vspace{-0.5em}
\end{figure}
\section{Learning with Minimal Data through Auxiliary Task Reweighting}
\label{sec:ARML}

Suppose we have a main task with training data  $\mathcal{T}_m$ (including labels), and $K$ different auxiliary tasks with training data $\mathcal{T}_{a_k}$ for $k$-th task, where $k = 1, \cdots, K$. Our model contains a shared backbone with parameter $\theta$, and different heads for each task. Our goal is to find the optimal parameter $\theta^\ast$ for the main task, using data from main task as well as auxiliary tasks. Note that we care about performance on the main task and auxiliary tasks are only used to help train a better model on main task (\eg when we do not have enough data on the main task). In this section, we discuss how to learn with minimal data on main task by learning and reweighting auxiliary tasks.

\subsection{How Much Data Do We Need: Single-Task Scenario}
\label{sec:single_task_scenario}

Before discussing learning from multiple auxiliary tasks, we first start with the single-task scenario. When there is only one single task, we normally train a model by minimizing the following loss:
\begin{equation}
\label{eq:single_task_loss}
    \mathcal{L}(\theta) = -\log p(\mathcal{T}_m | \theta) -\log p(\theta) = -\log (p(\mathcal{T}_m | \theta) \cdot p(\theta)),
\end{equation}
where $p(\mathcal{T}_m | \theta)$ is the likelihood of training data and $p(\theta)$ is the prior. Usually, a relatively weak prior (\eg Gaussian prior when using weight decay) is chosen, reflecting our weak knowledge about the true parameter distribution, which we call `ordinary prior'. Meanwhile, we also assume there exists an unknown `true prior' $p^\ast(\theta)$ where the optimal parameter $\theta^\ast$ is actually sampled from. This true prior is normally more selective and informative (\eg having a small support set) (See Fig.~\hyperref[fig:overview]{1(a)}) ~\cite{baxter1997bayesian}.

Now our question is, how much data do we need to learn the task. Actually the answer depends on the choice of the prior $p(\theta)$. 
If we know the informative `true prior' $p^\ast (\theta)$, only a few data items are needed to localize the best parameters $\theta^\ast$ within the prior. 
However, if the prior is rather weak, we have to search $\theta$ in a larger space, which needs more data. Intuitively, the required amount of data is related to the divergence between $p(\theta)$ and $p^\ast(\theta)$: the closer they are, the less data we need.

In fact, it has been proven~\cite{baxter1997bayesian} that the expected amount of information needed to solve a single task is
\begin{equation}
    \mathcal{I} = D_{\mathrm{KL}}\infdivx{p^\ast}{p} + H(p^\ast), 
\end{equation}
where $D_{\mathrm{KL}}\infdivx{\cdot}{\cdot}$ is Kullback--Liebler divergence, and $H(\cdot)$ is the entropy. This means we can reduce the data requirement by choosing a prior closer to the true prior $p^\ast$. Suppose $p(\theta)$ is parameterized by $\alpha$, \ie, $p(\theta) = p_\alpha (\theta)$, then we can minimize data requirement by choosing $\alpha$ that satisfies:
\begin{equation}
\label{eq:objective_single_task}
    \min_{\alpha} D_{\mathrm{KL}}\infdivx{p^\ast}{p_\alpha}.
\end{equation}
However, due to our limited knowledge about the true prior $p^\ast$, it is unlikely to manually design a family of $p_\alpha$ that has a small value in~\eqref{eq:objective_single_task}. Instead, we will show that we can define $p_\alpha$ implicitly through auxiliary tasks, utilizing their natural connections to the main task.

\subsection{Auxiliary-Task Reweighting}
\label{sec:aux_task_reweightign}

When using auxiliary tasks, we optimize the following joint-training loss:
\begin{equation}
\label{eq:multi_task_loss}
    \mathcal{L}(\theta) = -\log p(\mathcal{T}_m | \theta) - \sum_{k=1}^K \alpha_k \log p(\mathcal{T}_{a_k} | \theta) = -\log (p(\mathcal{T}_M | \theta) \cdot \prod_{k=1}^K p^{\alpha_k}(\mathcal{T}_{a_k} | \theta)),
\end{equation}
where auxiliary losses are weighted by a set of task weights $\bm{\alpha} = (\alpha_1, \cdots, \alpha_K)$, and added together with the main loss. By comparing~\eqref{eq:multi_task_loss} with single-task loss~\eqref{eq:single_task_loss}, we can see that we are implicitly using $p_{\bm{\alpha}} (\theta) = \frac{1}{Z(\bm{\alpha})} \prod_{k=1}^K p^{\alpha_k}(\mathcal{T}_{a_k} | \theta)$ as a `surrogate' prior for the main task, where $Z(\bm{\alpha})$ is the normalization term (partition function). Therefore, as discussed in Sec.~\ref{sec:single_task_scenario}, if we adjust task weights $\bm{\alpha}$ towards
\begin{equation}
\label{eq:objective_multi_task}
    \min_{\bm{\alpha}} D_{\mathrm{KL}}\infdivx{p^\ast (\theta)}{\frac{1}{Z(\bm{\alpha})} \prod_{k=1}^K p^{\alpha_k}(\mathcal{T}_{a_k} | \theta)},
\end{equation}
then the data requirement on the main task can be minimized. This implies an automatic strategy of task reweighting. 
Higher weights can be assigned to the auxiliary tasks with more relevant information to the main task, namely the parameter distribution of the tasks is closer to that of the main task. 
After taking the weighted combination of auxiliary tasks, the prior information is maximized, and the main task can be learned with minimal additional information (data). See Fig.~\hyperref[fig:overview]{1(b)} for an illustration.

\subsection{Our Approach}
\label{sec:approach}

In Sec.~\ref{sec:aux_task_reweightign} we have discussed about how to minimize the data requirement on the main task by reweighting and learning auxiliary tasks. However, the objective in~\eqref{eq:objective_multi_task} is hard to optimize directly due to a few practical problems:

\begin{itemize}
    \item \textbf{True Prior (P1)}: We do not know the true prior $p^\ast$ in advance.
    \item \textbf{Samples (P2)}: KL divergence is in form of an expectation, which needs samples to estimate. However, sampling from a complex distribution is non-trivial.
    \item \textbf{Partition Function (P3)}: Partition function $Z(\bm{\alpha}) = \int \prod_{k=1}^K p^{\alpha_k}(\mathcal{T}_{a_k} | \theta) d\theta$ is given by an intractable integral,  preventing us from getting the accurate density function $p_{\bm{\alpha}}$.
\end{itemize}
To this end, we use different tools or approximations to design a practical algorithm, and keep its validity and effectiveness from both theoretical and empirical aspects, as presented below.

\paragraph{True Prior (P1)}
\label{para:true-prior}

In the original optimization problem~\eqref{eq:objective_multi_task}, we are minimizing
\begin{equation}
    D_{\mathrm{KL}}\infdivx{p^\ast (\theta)}{p_{\bm{\alpha}} (\theta)} = E_{\theta \sim p^\ast} \log \frac{p^\ast (\theta)}{p_{\bm{\alpha}} (\theta)},
\end{equation}
which is the expectation of $\log \frac{p^\ast (\theta)}{p_{\bm{\alpha}} (\theta)}$ \wrt{\ $p^\ast (\theta)$}. The problem is, $p^\ast(\theta)$ is not accessible. However, we can notice that for each $\theta^\ast$ sampled from prior $p^\ast$, it is likely to give a high data likelihood $p(\mathcal{T}_m | \theta^\ast)$, which means $p^\ast (\theta)$ is `covered' by $p(\mathcal{T}_m | \theta)$, \ie, $p(\mathcal{T}_m | \theta)$ has high density both in the support set of $p^\ast (\theta)$, and in some regions outside. Thus we propose to minimize $D_{\mathrm{KL}}\infdivx{p^m(\theta)}{p_{\bm{\alpha}} (\theta)}$ instead of $D_{\mathrm{KL}}\infdivx{p^\ast (\theta)}{p_{\bm{\alpha}} (\theta)}$, where $p^m(\theta)$ is the parameter distribution induced by data likelihood $p(\mathcal{T}_m | \theta)$, \ie, $p^m(\theta) \propto p(\mathcal{T}_m | \theta)$. Furthermore, we propose to take the expectation \wrt{\ $\frac{1}{Z^\prime(\bm{\alpha})} p^m(\theta) p_{\bm{\alpha}} (\theta)$} instead of $p^m(\theta)$ due to the convenience of sampling while optimizing the joint loss (see \hyperref[para:samples]{\textbf{P2}} for more details). Then our objective becomes
\begin{equation}
\label{eq:objective_multi_task_approx}
    \min_{\bm{\alpha}} E_{\theta \sim p^J} \log \frac{p^m(\theta)}{p_{\bm{\alpha}} (\theta)},
\end{equation}
where $p^J(\theta) = \frac{1}{Z^\prime(\bm{\alpha})} p^m(\theta) p_{\bm{\alpha}}(\theta)$, and $Z^\prime(\bm{\alpha})$ is the normalization term. 

Now we can minimize~\eqref{eq:objective_multi_task_approx} as a feasible surrogate for~\eqref{eq:objective_multi_task}. 
However, minimizing~\eqref{eq:objective_multi_task_approx} may end up with a suboptimal $\bm{\alpha}$ for~\eqref{eq:objective_multi_task}. 
Due to the fact that $p^m(\theta)$ also covers some `overfitting area' other than $p^\ast (\theta)$, we may push $p_{\bm{\alpha}}(\theta)$ closer to the overfitting area instead of $p^\ast (\theta)$ by minimizing~\eqref{eq:objective_multi_task_approx}. 
But we prove that, under some mild conditions, if we choose $\bm{\alpha}$ that minimizes~\eqref{eq:objective_multi_task_approx}, the value of~\eqref{eq:objective_multi_task} is also bounded near the optimal value:

\begin{theorem}
\label{thrm:optimization_bound}
(Informal and simplified version) Let us denote the optimal weights for~\eqref{eq:objective_multi_task} and~\eqref{eq:objective_multi_task_approx} by $\bm{\alpha}^\ast$ and $\hat{\bm{\alpha}}$, respectively. Assume the true prior $p^\ast(\theta)$ has a small support set $S$. Let $\gamma = \max_{\bm{\alpha}} \int_{\theta \notin S} p^m(\theta) p_{\bm{\alpha}}(\theta) d\theta$ be the maximum of the integral of $p^m(\theta) p_{\bm{\alpha}}(\theta)$ outside $S$, then we have
\begin{equation}
    D_{\mathrm{KL}}\infdivx{p^\ast}{p_{\bm{\alpha}^\ast}} \leq D_{\mathrm{KL}}\infdivx{p^\ast}{p_{\hat{\bm{\alpha}}}} \leq  D_{\mathrm{KL}}\infdivx{p^\ast}{p_{\bm{\alpha}^\ast}} \ + \ C\gamma^2 \ - \ C^\prime \gamma^2 \log \gamma.
\end{equation}
\end{theorem}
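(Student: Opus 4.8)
The plan is to reduce the two-sided bound to a single uniform estimate comparing the target objective $g(\bm{\alpha}) := D_{\mathrm{KL}}\infdivx{p^\ast}{p_{\bm{\alpha}}}$ of~\eqref{eq:objective_multi_task} with the surrogate objective $f(\bm{\alpha}) := E_{\theta\sim p^J}\log\frac{p^m(\theta)}{p_{\bm{\alpha}}(\theta)}$ of~\eqref{eq:objective_multi_task_approx}. The left-hand inequality is immediate: $\bm{\alpha}^\ast$ is by definition a minimizer of $g$, so $g(\bm{\alpha}^\ast)\le g(\bm{\alpha})$ for every $\bm{\alpha}$, in particular for $\bm{\alpha}=\hat{\bm{\alpha}}$. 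For the right-hand inequality I would invoke the standard ``the minimizer of a good surrogate is near-optimal for the target'' argument. Since $\hat{\bm{\alpha}}$ minimizes $f$, we have $f(\hat{\bm{\alpha}})\le f(\bm{\alpha}^\ast)$, hence
\[
  g(\hat{\bm{\alpha}}) \;\le\; f(\hat{\bm{\alpha}}) + \sup_{\bm{\alpha}}\bigl(g(\bm{\alpha})-f(\bm{\alpha})\bigr) \;\le\; f(\bm{\alpha}^\ast) + \sup_{\bm{\alpha}}\bigl(g(\bm{\alpha})-f(\bm{\alpha})\bigr) \;\le\; g(\bm{\alpha}^\ast) + \eta,
\]
where $\eta := \sup_{\bm{\alpha}}\bigl(g(\bm{\alpha})-f(\bm{\alpha})\bigr) + \sup_{\bm{\alpha}}\bigl(f(\bm{\alpha})-g(\bm{\alpha})\bigr)$ is the oscillation of $f-g$ over the weight simplex. (This quantity is insensitive to an $\bm{\alpha}$-independent additive offset between $f$ and $g$, so any such constant can be discarded.) The whole theorem then follows once we show $\eta \le C\gamma^2 - C'\gamma^2\log\gamma$.

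Bounding $\eta$ is the technical heart, and this is where I expect the real work. I would split every integral defining $f$ and $g$ into its part over the small support set $S$ of $p^\ast$ and its part over $\theta\notin S$. Write $\gamma_{\bm{\alpha}} := \int_{\theta\notin S} p^m(\theta)p_{\bm{\alpha}}(\theta)\,d\theta \le \gamma$ and $Z'(\bm{\alpha}) = \int_{\theta\in S} p^m p_{\bm{\alpha}}\,d\theta + \gamma_{\bm{\alpha}}$, so $p^J=\tfrac{1}{Z'(\bm{\alpha})}p^m p_{\bm{\alpha}}$ places mass $\gamma_{\bm{\alpha}}/Z'(\bm{\alpha})$ outside $S$. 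On $S$, using the ``covering'' relation between $p^m|_S$ and $p^\ast$ together with the regularity/boundedness of the densities (the additional mild conditions that the informal statement suppresses), the $S$-contribution of $f(\bm{\alpha})$ agrees with $g(\bm{\alpha})$ up to the distortion caused by normalizing with $Z'(\bm{\alpha})$ rather than the $\gamma_{\bm{\alpha}}=0$ normalizer; expanding $\log Z'(\bm{\alpha})$ via $\log(1+x)\le x$ controls this distortion by a multiple of $\gamma_{\bm{\alpha}}$ times a factor that is itself $O(\gamma_{\bm{\alpha}})$ (namely the $p^J$-mass of the boundary region), giving $O(\gamma^2\log(1/\gamma))$. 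On $\theta\notin S$, the contribution to $f$ is $\tfrac{1}{Z'(\bm{\alpha})}\int_{\theta\notin S} p^m p_{\bm{\alpha}}\log\frac{p^m}{p_{\bm{\alpha}}}$; since the sub-measure $p^m p_{\bm{\alpha}}\mathbf{1}[\theta\notin S]$ has total mass at most $\gamma$, an elementary $x\log x$ / entropy-of-a-mass-$\gamma$-chunk bound shows this integral is $O(\gamma\log(1/\gamma))$, and the prefactor $1/Z'(\bm{\alpha})$ together with the fact that $g$ has \emph{no} mass here contributes the missing second factor of $\gamma$. Summing the two pieces and tracking the constants produces exactly the $C\gamma^2 - C'\gamma^2\log\gamma$ form, valid for $\gamma<1$.

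The main obstacle I anticipate is sharpness: getting the rate down to $O(\gamma^2\log(1/\gamma))$ rather than the more easily obtained $O(\gamma\log(1/\gamma))$ or $O(\gamma)$. This requires being careful that \emph{both} the leaked mass $\gamma_{\bm{\alpha}}$ outside $S$ \emph{and} the $p^J$-weight it receives are each $O(\gamma)$, and that every ``bulk'' term on $S$ cancels between $f$ and $g$ \emph{exactly} (modulo the harmless offset) rather than only up to another small error. That in turn is what makes the precise form of the $p^m$-versus-$p^\ast$ covering hypothesis essential: stated too loosely it leaves a residual discrepancy between $p^m$ and $p^\ast$ on $S$ that does not shrink with $\gamma$, so one must pin down exactly which statement about $p^m|_S$ (e.g.\ that it is proportional, or uniformly close, to $p^\ast$ on $S$) is being assumed. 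A final bookkeeping point is to verify that the conditions indeed force $\gamma<1$, i.e.\ that the regime is the one where $p^\ast$ concentrates on the small set $S$, so that the claimed slack $C\gamma^2 - C'\gamma^2\log\gamma$ is genuinely nonnegative.
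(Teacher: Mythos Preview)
Your oscillation-bound strategy has a genuine gap: the quantity $\eta=\sup_{\bm\alpha}(g-f)+\sup_{\bm\alpha}(f-g)$ is not $O(\gamma^2\log(1/\gamma))$; it is not even finite. Under the paper's Assumption~1 (that $p^m,p_{\bm\alpha}$ are effectively constant on the small set $S$ and $p^\ast$ is uniform there), one computes $Z(S;\bm\alpha):=\int_S p^m p_{\bm\alpha}=C_2e^{-KL_{\bm\alpha}}$ with $KL_{\bm\alpha}:=g(\bm\alpha)$, and the $S$-part of $f$ equals $\frac{C_2e^{-KL_{\bm\alpha}}}{C_2e^{-KL_{\bm\alpha}}+\gamma_{\bm\alpha}}(KL_{\bm\alpha}+C_1)$. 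Hence the $p^J$-mass outside $S$ is $\gamma_{\bm\alpha}/(C_2e^{-KL_{\bm\alpha}}+\gamma_{\bm\alpha})$, which is \emph{not} $O(\gamma)$ uniformly: as $KL_{\bm\alpha}\to\infty$ it tends to $1$, the $S$-part of $f$ tends to $0$, and $f(\bm\alpha)$ is governed entirely by the bounded $\bar S$-integral. So $g(\bm\alpha)\to\infty$ while $f(\bm\alpha)$ stays bounded, and $\sup_{\bm\alpha}(g-f)=+\infty$. Your claim that ``both the leaked mass and the $p^J$-weight it receives are each $O(\gamma)$'' fails precisely because the denominator $Z'(\bm\alpha)$ is not bounded below by a constant independent of $\bm\alpha$.

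The paper's proof avoids any uniform comparison of $f$ and $g$. It first sandwiches $f$ between explicit envelopes $f_l,f_u$ (via Jensen on the $\bar S$-integral), then uses only the single chain $f_l(\hat{\bm\alpha})\le f(\hat{\bm\alpha})\le f(\bm\alpha^\ast)\le f_u(\bm\alpha^\ast)$. The crucial extra ingredient you are missing is an additional hypothesis (their Assumption~2): that $KL_{\bm\alpha^\ast}$ is already small enough so that $f_u(\bm\alpha^\ast)$ lies below $\lim_{KL\to\infty}f_l$. This is what rules out the ``bad'' branch where $\hat{\bm\alpha}$ has tiny $f$ but huge $KL$. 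From this assumption they extract the non-uniform lower bounds $C_2e^{-KL_{\bm\alpha^\ast}}\ge C_5/\gamma$ and $C_2e^{-KL_{\hat{\bm\alpha}}}\ge C_5/\gamma$; plugging these into the denominators of $f_l(\hat{\bm\alpha})\le f_u(\bm\alpha^\ast)$ is exactly what produces the second factor of $\gamma$ (the first coming from $Z(\bar S;\cdot)\le\gamma$), yielding the $\gamma^2\log(1/\gamma)$ slack. Without an assumption of this type the theorem is false, so your plan cannot recover the stated rate.
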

The formal version and proof can be found in Appendix. Theorem~\ref{thrm:optimization_bound} states that optimizing~\eqref{eq:objective_multi_task_approx} can also give a near-optimal solution for~\eqref{eq:objective_multi_task}, as long as $\gamma$ is small. This condition holds when $p^m(\theta)$ and $p_{\bm{\alpha}}(\theta)$ do not reach a high density at the same time outside $S$. This is reasonable because overfitted parameter of main task (\ie, $\theta$ giving a high training data likelihood outside $S$) is highly random, depending on how we sample the training set, thus is unlikely to meet the optimal parameters of auxiliary tasks. In practice, we also find this approximation gives a robust result (Sec.~\ref{sec:exp_q3}).

\paragraph{Samples (P2)}
\label{para:samples}

To estimate the objective in~\eqref{eq:objective_multi_task_approx}, we need samples from $p^J(\theta) = \frac{1}{Z^\prime(\bm{\alpha})} p^m(\theta) p_{\bm{\alpha}}(\theta)$. Apparently we cannot sample from this complex distribution directly. However, we notice that $p^J$ is what we optimize in the joint-training loss~\eqref{eq:multi_task_loss}, \ie, $\mathcal{L}(\theta) \propto -\log p^J(\theta)$. To this end, we use the tool of Langevin dynamics~\cite{neal2011mcmc,welling2011bayesian} to sample from $p^J$ while optimizing the joint-loss~\eqref{eq:multi_task_loss}. Specifically, at the $t$-th step of SGD, we inject a Gaussian noise with a certain variance into the gradient step:
\begin{equation}
\label{eq:langevin_dynamics}
    \Delta \theta_t = \epsilon_t \nabla \log p^J(\theta) + \eta_t,
\end{equation}
where $\epsilon_t$ is the learning rate, and $\eta_t \sim N(0, 2\epsilon_t)$ is a Guassian noise. With the injected noise, $\theta_t$ will converge to samples from $p^J$, which can then be used to estimate~\eqref{eq:objective_multi_task_approx}. In practice, we inject noise in early epochs to sample from $p^J$ and optimize $\bm{\alpha}$, and then return to regular SGD once $\bm{\alpha}$ has converged. 
Note that we do not anneal the learning rate as in~\cite{welling2011bayesian} because we find in practice that stochastic gradient noise is negligible compared with injected noise (see Appendix).

\paragraph{Partition Function (P3)}
\label{para:partition_function}

To estimate~\eqref{eq:objective_multi_task_approx}, we need the exact value of surrogate prior $p_{\bm{\alpha}} (\theta) = \frac{1}{Z(\bm{\alpha})} \prod_{k=1}^K p^{\alpha_k}(\mathcal{T}_{a_k} | \theta)$. Although we can easily calculate the data likelihood $p(\mathcal{T}_{a_k} | \theta)$, the partition function $Z(\bm{\alpha})$ is intractable. The same problem also occurs in model estimation~\cite{gutmann2010noise}, Bayesian inference~\cite{murray2004bayesian}, \etc. A common solution is to use score function $\nabla \log p_{\bm{\alpha}} (\theta)$ as a  substitution of $p_{\bm{\alpha}} (\theta)$ to estimate relationship with other distributions~\cite{hyvarinen2005estimation,liu2016stein,hu2018stein}. For one reason, score function can uniquely decide the distribution. It also has other nice properties. For example, the divergence defined on score functions (also known as \emph{Fisher divergence})
\begin{equation}
    F\infdivx{p}{q} = E_{\theta \sim p} \norm{\nabla \log p(\theta) - \nabla \log q(\theta)}^2_2
\end{equation}
is stronger than many other divergences including KL divergence, Hellinger distance, \etc~\cite{liu2016kernelized,hu2018stein}. Most importantly, using score function can obviate estimation of partition function which is constant \wrt $\theta$. To this end, we propose to minimize the distance between score functions instead, and our objective finally becomes
\begin{equation}
\label{eq:objective_multi_task_final}
    \min_{\bm{\alpha}} E_{\theta \sim p^J} \norm{\nabla \log p(\mathcal{T}_m | \theta) - \nabla \log p_{\bm{\alpha}} (\theta)}^2_2.
\end{equation}
Note that $\nabla \log p^m(\theta) = \nabla \log p(\mathcal{T}_m | \theta)$. In Appendix we show that under mild conditions the optimal solution for~\eqref{eq:objective_multi_task_final} is also the optimal or near-optimal $\bm{\alpha}$ for~\eqref{eq:objective_multi_task} and~\eqref{eq:objective_multi_task_approx} . We find in practice that optimizing~\eqref{eq:objective_multi_task_final} generally gives optimal weights for minimum-data learning.

\begin{algorithm}[t]
\caption{ARML (\textbf{A}uxiliary Task \textbf{R}eweighting for \textbf{M}inimum-data \textbf{L}earning)}
\label{alg}
\begin{algorithmic}

\STATE \textbf{Input:} main task data $\mathcal{T}_m$, auxiliary task data $\mathcal{T}_{a_k}$, initial parameter $\theta_0$, initial task weights $\bm{\alpha}$
\STATE \textbf{Parameters:} learning rate of $t$-th iteration $\epsilon_t$, learning rate for task weights $\beta$
\\~\\

\FOR{iteration $t = 1$ to $T$}
    \IF{$\bm{\alpha}$ has not converged}
        \STATE $\theta_t \gets \theta_{t-1} - \epsilon_t (- \nabla \log p(\mathcal{T}_m | \theta_{t-1}) - \sum_{k=1}^K \alpha_k \nabla \log p(\mathcal{T}_{a_k} | \theta_{t-1})) + \eta_t$
        \STATE $\bm{\alpha} \gets \bm{\alpha} - \beta \nabla_{\bm{\alpha}} \norm{\nabla \log p(\mathcal{T}_m | \theta_t) - \sum_{k=1}^K \alpha_k \nabla \log p(\mathcal{T}_{a_k} | \theta_t)}^2_2$ 
        \STATE Project $\bm{\alpha}$ back into $\mathcal{A}$
    \ELSE
        \STATE $\theta_t \gets \theta_{t-1} - \epsilon_t (- \nabla \log p(\mathcal{T}_m | \theta_{t-1}) - \sum_{k=1}^K \alpha_k \nabla \log p(\mathcal{T}_{a_k} | \theta_{t-1}))$
    \ENDIF
\ENDFOR

\end{algorithmic}
\end{algorithm}

\subsection{Algorithm}
\label{sec:algorithm}

Now we present the final algorithm of auxiliary task reweighting for minimum-data learning (ARML). 
The full algorithm is shown in Alg.~\ref{alg}.
First, our objective is~\eqref{eq:objective_multi_task_final}. Until $\bm{\alpha}$ converges, we use Langevin dynamics~\eqref{eq:langevin_dynamics} to collect samples at each iteration, and then use them to estimate~\eqref{eq:objective_multi_task_final} and update $\alpha$. Additionally, we only search $\bm{\alpha}$ in an affine simplex $\mathcal{A} = \{\bm{\alpha} | \sum_k \alpha_k = K; \ \alpha_k \ge 0, \forall k\}$ to decouple task reweighting from the global weight of auxiliary tasks~\cite{chen2017gradnorm}. Please also see Appendix~\ref{appendix_tips} for details on the algorithm implementation in practice.


\section{Experiments}
\label{sec:exp}

For experiments, we test effectiveness and robustness of ARML under various settings. This section is organized as follows. First in Sec.~\ref{sec:exp_q1}, we test whether ARML can reduce data requirement in different settings (semi-supervised learning, multi-label classification), and compare it with other reweighting methods. In Sec.~\ref{sec:exp_q2}, we study an extreme case: based on an unsupervised setting (\eg domain generalization), if a little extra labeled data is provided (\eg 1 or 5 labels per class), can ARML maximize its benefit and bring a non-trivial improvement over unsupervised baseline and other few-shot algorithms? Finally in Sec.~\ref{sec:exp_q3}, we test ARML's robustness under different levels of data scarcity and validate the rationality of approximation we made in Sec.~\ref{sec:approach}.


\subsection{ARML can Minimize Data Requirement}
\label{sec:exp_q1}

To get started, we show that ARML can minimize data requirement under two realist settings: semi-supervised learning and multi-label classification. 
we consider the following task reweighting methods for comparison: (i) \textbf{Uniform  (baseline)}: all weights are set to 1, (ii) \textbf{AdaLoss}~\cite{hu2019learning}: tasks are reweighted based on uncertainty, (iii) \textbf{GradNorm}~\cite{chen2017gradnorm}: balance each task's gradient norm, (iv) \textbf{CosineSim}~\cite{du2018adapting}: tasks are filtered out when having negative cosine similarity $\cos (\nabla \log p(\mathcal{T}_{a_k} | \theta), \nabla \log p(\mathcal{T}_m | \theta))$, (v) \textbf{OL\_AUX}~\cite{lin2019adaptive}: tasks have higher weights when the gradient inner product $\nabla \log p(\mathcal{T}_{a_k} | \theta)^T \nabla \log p(\mathcal{T}_m | \theta)$ is large. Besides, we also compare with grid search as an `upper bound' of ARML. Since grid search is extremely expensive, we only compare with it when the task number is small (\eg $K=2$).

\paragraph{Semi-supervised Learning (SSL)}
In SSL, one generally trains classifier with certain percentage of labeled data as the main task, and at the same time designs different losses on unlabeled data as auxiliary tasks.
Specifically,  we use \emph{Self-supervised Semi-supervised Learning} (S4L)~\cite{zhai2019s4l} as our baseline algorithm. S4L uses self-supervised methods on unlabeled part of training data, and trains classifier on labeled data as normal. Following~\cite{zhai2019s4l}, we use two kinds of self-supervised methods: \emph{Rotation} and \emph{Exemplar-MT}. In \emph{Rotation}, we rotate each image by $[0^{\degree}, 90^{\degree}, 180^{\degree}, 270^{\degree}]$ and ask the network to predict the angle. In \emph{Exemplar-MT}, the model is trained to extract feature invariant to a wide range of image transformations. Here we use random flipping, gaussian noise~\cite{chen2020simple} and Cutout~\cite{devries2017improved} as data augmentation. During training, each image is randomly augmented, and then features of original image and augmented image are encouraged to be close. 


\begin{minipage}{\textwidth}

\begin{minipage}{0.52\textwidth}

    \captionof{table}{Test error of semi-supervised learning on CIFAR-10 and SVHN. From top to bottom: purely-supervised method, state-of-the-art semi-supervised methods, and S4L with different reweighting schemes. $ ^\ast$ means multiple runs are needed.}
  \label{tab:S4L}
  \centering
  \begin{small}
  \begin{tabular}{lcc}
    \toprule
     & CIFAR-10 & SVHN \\
     & (4000 labels) & (1000 labels) \\
    \midrule
    Supervised & 20.26 $\pm$ .38 & 12.83 $\pm$ .47 \\
    \midrule
    $\Pi$-Model~\cite{laine2016temporal} & 16.37 $\pm$ .63 & \ \ 7.19 $\pm$ .27 \\
    Mean Teacher~\cite{tarvainen2017mean} & 15.87 $\pm$ .28 & \ \ 5.65 $\pm$ .47\\
    VAT~\cite{oliver2018realistic} & 13.86 $\pm$ .27 & \ \ 5.63 $\pm$ .20 \\
    VAT + EntMin~\cite{grandvalet2005semi} & \textbf{13.13} $\pm$  .39 & \ \ \textbf{5.35} $\pm$ .19\\
    Pseudo-Label~\cite{lee2013pseudo} & 17.78 $\pm$ .57 & \ \ 7.62 $\pm$ .29 \\
    \midrule
    S4L (Uniform) & 15.67 $\pm$ .29 & \ \ 7.83 $\pm$ .33 \\
    S4L + AdaLoss & 21.06 $\pm$ .17 & 11.53 $\pm$ .39 \\
    S4L + GradNorm & 14.07 $\pm$ .44 & \ \ 7.68 $\pm$ .13 \\
    S4L + CosineSim & 15.03 $\pm$ .31 & \ \ 7.02 $\pm$ .25 \\
    S4L + OL\_AUX & 16.07 $\pm$ .51 & \ \ 7.82 $\pm$ .32 \\
    S4L + GridSearch$^\ast$ & 13.76 $\pm$ .22 & \ \ 6.07 $\pm$ .17 \\
    S4L + ARML (ours) & \textbf{13.68} $\pm$ .35 & \ \ \textbf{5.89} $\pm$ .22 \\
    \bottomrule
  \end{tabular}
  \end{small}
    
\end{minipage}\hfill
\begin{minipage}{0.45\textwidth}
    \centering
    \includegraphics[width=1.0\textwidth]{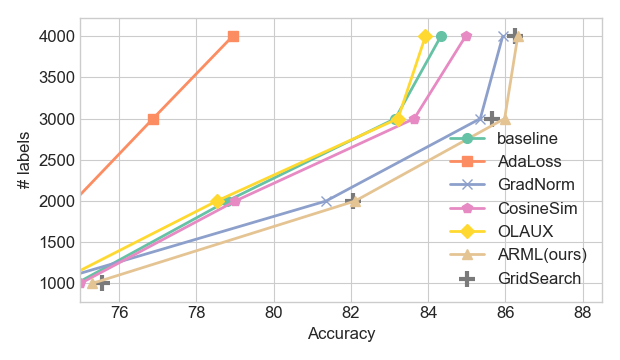} 
    \vspace{-1.5em}
    \captionof{figure}{Amount of labeled data required to reach certain accuracy on CIFAR-10.}
    \label{fig:S4L}

    \centering
    \includegraphics[width=1.0\textwidth]{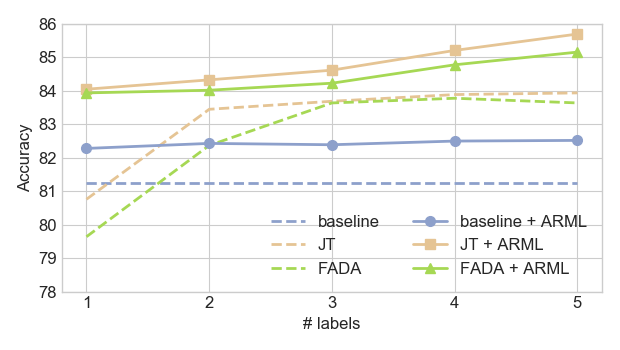} 
    \vspace{-1.5em}
    \captionof{figure}{Accuracy of multi-source domain generalization with Art as target. }
    \label{fig:dg}
\end{minipage}

\end{minipage}

Based on S4L, we use task reweighting to adjust the weights for different self-supervised losses. Following the literature~\cite{tarvainen2017mean,oliver2018realistic}, we test on two widely-used benchmarks: CIFAR-10~\cite{krizhevsky2009learning}  with 4000 out of 45000 images labeled, and SVHN~\cite{netzer2011reading} with 1000 out of 65932 images labeled. We report test error of S4L with different reweighting schemes in Table~\ref{tab:S4L}, along with other SSL methods. We notice that, on both datasets, with the same amount of labeled data, ARML makes a better use of the data than uniform baseline as well as other reweighting methods. Remarkably, with only one pass, ARML is able to find the optimal weights while GridSearch needs multiple runs. S4L with our ARML applied is comparable to other state-of-the-art SSL methods. 
Notably, we only try \emph{Rotation} and \emph{Exemplar-MT}, while exploring more auxiliary tasks could further benefit the main task and we leave it for future study.

To see whether ARML can consistently reduce data requirement, we also test the amount of data required to reach different accuracy on CIFAR-10. As shown in Fig.~\ref{fig:S4L}, with ARML applied, we only need about half of labels to reach a decent performance. This also agrees with the results of GridSearch, showing the maximum improvement from auxiliary tasks during joint training. 

\paragraph{Multi-label Classification (MLC)}
We also test our method in MLC. We use the CelebA dataset~\cite{liu2015deep}. It contains 200K face images, each labeled with 40 binary attributes. We cast this into a MLC problem, where we randomly choose one target attribute as the main classification task, and other 39 as auxiliary tasks. To simulate our data-scarce setting, we only use 1\% labels for main task.

We test different reweighting methods and list the results in Table~\ref{tab:multi_label}. With the same amount of labeled data, ARML can help find better and more generalizable model parameters than baseline as well as other reweighting methods. This also implies that ARML has a consistent advantage even when handling a large number of tasks. For a further verification, we also check if the learned relationship between different face attributes is aligned with human's intuition. In Table~\ref{tab:multi_label_analysis}, we list the top 5 auxiliary tasks with the highest weights, and also the top 5 with the lowest weights. As we can see, ARML has automatically picked attributes describing facial hair (\eg. Mustache, Sideburns, Goatee), which coincides with the main task 5\_o\_Clock\_Shadow, another kind of facial hair. On the other hand, the tasks with low weights seem to be unrelated to the main task. This means ARML can actually learn the task relationship that matches our intuition.

\begin{table}[t]\small
    
\begin{minipage}{0.30\textwidth}
  \caption{Test error of main task on CelebA.}
  \label{tab:multi_label}
  \centering
  \begin{tabular}{lc}
    \toprule
     & Test Error\\
    \midrule
    Baseline & 6.70 $\pm$ .18 \\
    AdaLoss~\cite{hu2019learning} & 7.21 $\pm$ .11 \\
    GradNorm~\cite{chen2017gradnorm} & 6.44 $\pm$ .07 \\
    CosineSim~\cite{du2018adapting} & 6.51 $\pm$ .14 \\
    OL\_AUX~\cite{lin2019adaptive} & 6.32 $\pm$ .17 \\
    ARML (ours) & \textbf{5.97} $\pm$ .18 \\
    \bottomrule
  \end{tabular}
\end{minipage}\hfill
\begin{minipage}{0.65\textwidth}
\caption{Top 5 \textcolor{YellowGreen}{relative} / \textcolor{Maroon}{irrelative} attributes (auxiliary tasks) to the target attribute (main task) on CelebA.}
  \label{tab:multi_label_analysis}
  \centering
  \begin{tabular}{ccc}
    \toprule
    main task & most related tasks & least related tasks\\
    \midrule
    \multirow{5}{*}{5\_o\_Clock\_Shadow} & \textcolor{YellowGreen}{Mustache} & \textcolor{Maroon}{Mouth\_Slightly\_Open}\\
     & \textcolor{YellowGreen}{Bald} & \textcolor{Maroon}{Male} \\
     & \textcolor{YellowGreen}{Sideburns} & \textcolor{Maroon}{Attractive}\\
     & \textcolor{YellowGreen}{Rosy\_Cheeks} & \textcolor{Maroon}{Heavy\_Makeup}\\
     & \textcolor{YellowGreen}{Goatee} & \textcolor{Maroon}{Smiling}\\
    \bottomrule
  \end{tabular}
\end{minipage}
\end{table}

\subsection{ARML can Benefit Unsupervised Learning at Minimal Cost}
\label{sec:exp_q2}

In Sec.~\ref{sec:exp_q1}, we use ARML to reweight tasks and find a better prior for main task in order to compensate for data scarcity. Then one may naturally wonder whether this still works under situations where the main task has no labeled data at all (\eg unsupervised learning). In fact, this is a meaningful question, not only because unsupervised learning is one of the most important problems in the community, but also because using auxiliary tasks is a mainstream of unsupervised learning methods~\cite{oord2018representation,carlucci2019domain,fu2015transductive}. 
Intuitively, as long as the family of prior $p_{\bm{\alpha}} (\theta)$ is strong enough (which is determined by auxiliary tasks), we can always find a prior that gives a good model even without label information. However, if we want to use ARML to find the prior, at least \emph{some} labeled data is required to estimate the gradient for main task (Eq.~\eqref{eq:objective_multi_task_final}). 
Then the question becomes, how minimum of the data does ARML need to find a proper set of weights? 
More specifically, can we use as little data as possible (\eg 1 or 5 labeled images per class) to make substantial improvement?

To answer the question, we conduct experiments in domain generalization, a well-studied unsupervised problem. In domain generalization, there is a target domain with no data (labeled or unlabeled), and multiple source domains with plenty of data. People usually train a model on source domains (auxiliary tasks) and transfer it to the target domain (main task). To use ARML, we relax the restriction a little by adding $N_m$ extra labeled images for target domain, where $N_m = 1, \cdots, 5$. This slightly relaxed setting is known as few-shot domain adaptation (FSDA) which was studied in~\cite{motiian2017few}, and we also add their FSDA results into comparison. For dataset selection, we use a common benchmark PACS~\cite{li2017deeper} which contains four distinct domains of Photo, Art, Cartoon and Sketch. We pick each one as target domain and the other three as source domains which are reweighted by our ARML.

\begin{table}[t]\small
  \caption{Results of multi-source domain generalization (w/ extra 5 labeled images per class in target domain). We list results with each of four domains as target domain. From top to down: domain generalization methods, FSDA methods and different methods equipped with ARML. JT is short for joint-training. $^\dagger$ means the results we reproduced are higher than originally reported.}
  \label{tab:dg}
  \vspace{1em}
  \centering
  \begin{tabular}{lccccc}
    \toprule
     Method & Extra label & Sketch & Art & Cartoon & Photo\\
    \midrule
    Baseline$^\dagger$ & \xmark & 75.34 & 81.25 & 77.35 & 95.93\\
    D-SAM~\cite{d2018domain} & \xmark & 77.83 & 77.33 & 72.43 & 95.30 \\
    JiGen~\cite{carlucci2019domain} & \xmark & 71.35 & 79.42 & 75.25 & 96.03 \\
    Shape-bias~\cite{asadi2019towards} & \xmark & \textbf{78.62} & \textbf{83.01} & \textbf{79.39} & \textbf{96.83} \\
    \midrule
    JT & \cmark & 78.52 & 83.94 & \textbf{81.36} & 97.01 \\
    FADA~\cite{motiian2017few} & \cmark & 79.23 & 83.64 & 79.39 & 97.07\\
    \midrule
    Baseline + ARML & \cmark & 79.35 & 82.52 & 77.30 & 95.99 \\
    JT + ARML & \cmark & \textbf{80.47} & \textbf{85.70} & 81.01 & \textbf{97.22}\\
    FADA + ARML & \cmark & 79.46 & 85.16 & 81.23 & 97.01\\
    \bottomrule
  \end{tabular}
  \
\end{table}

We first set $N_m = 5$ to see the results (Table~\ref{tab:dg}). Here we include both state-of-the-art domain generalization methods~\cite{asadi2019towards,carlucci2019domain,d2018domain} and FSDA methods~\cite{motiian2017few}. Since they are orthogonal to ARML, we apply ARML on both types of methods to see the relative improvement. Let us first look at domain generalization methods. Here the baseline refers to training a model on source domains (auxiliary tasks) and directly testing on target domain (main task). If we use the extra 5 labels to reweight different source domains with ARML, we can make a non-trivial improvement, especially with Sketch as target (4\% absolute improvement).
Note that in ``Baseline + ARML'', we update $\theta$ using only classification loss on source data (auxiliary loss), and the extra labeled data in the target domain are just used for reweighting the auxiliary tasks, which means the improvement completely comes from task reweighting. Additionally, joint-training (JT) and FSDA methods also use extra labeled images by adding them into classification loss. If we further use 
the extra labels for task reweighting, then we can make a further improvement and reach a state-of-the-art performance.

We also test performance of ARML with $N_m = 1, \cdots, 5$. As an example, here we use Art as target domain. As shown in Fig.~\ref{fig:dg}, ARML is able to improve the accuracy over different domain generalization or FSDA methods. Remarkably, when $N_m = 1$, although FSDA methods are under-performed, ARML can still bring an improvement of $\sim4\%$ accuracy. 
This means ARML can benefit unsupervised domain generalization with as few as 1 labeled image per class.

\subsection{ARML is Robust to Data Scarcity}
\label{sec:exp_q3}
Finally, we examine the robustness of our method. Due to the approximation made in Sec.~\ref{sec:approach}, ARML may find a suboptimal solution. For example, in the true prior approximation (\hyperref[para:true-prior]{\textbf{P1}}), we use $p(\mathcal{T}_m | \theta)$ to replace $p^\ast (\theta)$. When the size of $\mathcal{T}_m$ is large, these two should be close to each other. However, if we have less data, $p(\mathcal{T}_m | \theta)$ may also have high-value region outside $p^\ast (\theta)$ (\ie `overfitting' area), which may make the approximation inaccurate. To test the robustness of ARML, we check whether ARML can find similar task weights under different levels of data scarcity.

\begin{figure}[ht]
\centering
\begin{subfigure}[b]{0.3\linewidth}
  \centering
  \includegraphics[width=1.0\linewidth]{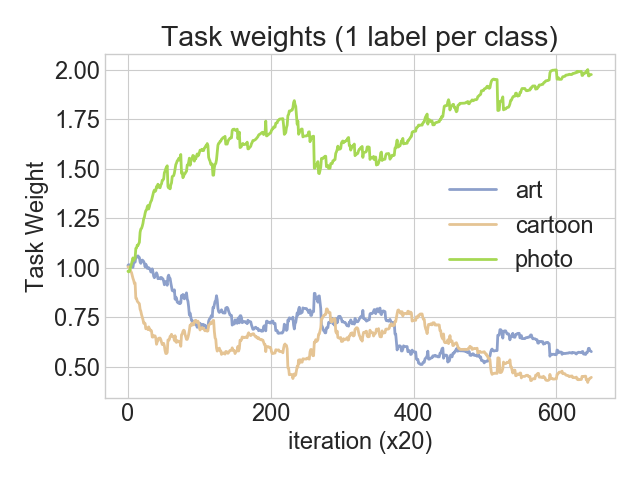}  
\end{subfigure}
\begin{subfigure}[b]{0.3\textwidth}
  \centering
  \includegraphics[width=1.0\linewidth]{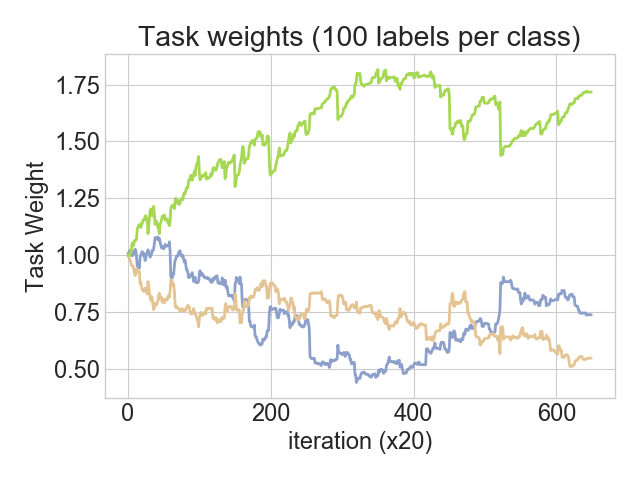}  
\end{subfigure}
\begin{subfigure}[b]{0.3\textwidth}
  \centering
  \includegraphics[width=1.0\linewidth]{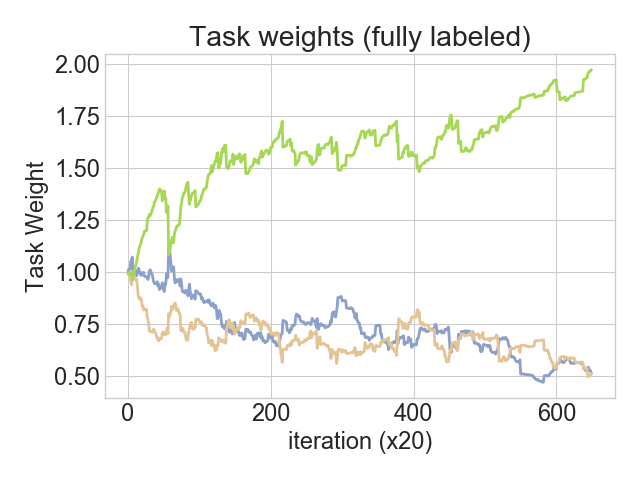}  
\end{subfigure}
\vspace{-0.5em}
\caption{Change of task weights during training under different levels of data scarcity. From left to right: one-shot, partially labeled and fully labeled.}
\label{fig:task_weights}
\end{figure}

We conduct experiments on multi-source domain generalization with Art as target domain. We test three levels of data scarcity: few-shot (1 label per class), partly labeled (100 labels per class) and fully labeled ($\sim 300$ labels per class). We plot the change of task weights during training time in Fig.~\ref{fig:task_weights}. We can see that task weights found by ARML are barely affected by data scarcity, even in few-shot scenario. This means ARML is able to find the optimal weights even with minimal guidance, verifying the rationality of approximation in Sec.~\ref{sec:approach} and the robustness of our method.



\section{Related Work}
\label{sec:related_work}

\paragraph{Additional Supervision from Auxiliary Tasks}

When there is not enough data to learn a task, it is common to introduce additional supervision from some related auxiliary tasks. For example, in semi-supervised learning, previous work has employed various kinds of manually-designed supervision on unlabeled data~\cite{zhai2019s4l,tarvainen2017mean,oliver2018realistic}. In reinforcement leaning, due to sample inefficiency, auxiliary tasks (\eg vision prediction~\cite{mirowski2016learning}, reward prediction~\cite{shelhamer2016loss}) are jointly trained to speed up convergence. In transfer learning or domain adaptation, models are trained on related domains/tasks and generalize to unseen domains~\cite{tzeng2015simultaneous,carlucci2019domain,asadi2019towards}. 
Learning using privileged information (LUPI) also employs additional knowledge (\eg. meta data, additional modality) during training time~\cite{vapnik2009new,hoffman2016learning,sharmanska2013learning}. However, LUPI does not emphasize the scarcity of training data as in our problem setting.

\paragraph{Multi-task Learning}

A highly related setting is multi-task learning (MTL). In MTL, models are trained to give high performance on different tasks simultaneously. Note that this is different from our setting because we only care about the performance on the main task. MTL is typically conducted through parameter sharing~\cite{sener2018multi,ando2005framework}, or prior sharing in a Bayesian manner~\cite{xue2007multi,heskes2000empirical,yu2005learning,bakker2003task}. 
Parameter sharing and joint learning can achieve better generalization over learning each task independently~\cite{ando2005framework}, which also motivates our work.
MTL has wide applications in areas including vision~\cite{bilen2016integrated}, language~\cite{collobert2008unified}, speech~\cite{huang2013cross}, \etc. We refer interested readers to this review~\cite{ruder2017overview}.

\paragraph{Adaptive Task Reweighting}
When learning multiple tasks, it is important to estimate the relationship between different tasks in order to balance multiple losses.
In MTL, this is usually realized by task clustering through a mixture prior~\cite{zhang2010convex,long2017learning,evgeniou2004regularized,bakker2003task}. However, this type of methods only screens out unrelated tasks without further differentiating related tasks. Another line of work balances multiple losses based on gradient norm~\cite{chen2017gradnorm} or uncertainty~\cite{hu2019learning,kendall2018multi}. 
In our problem setting, the focus is changed to estimate the relationship between the main task and auxiliary tasks.
In~\cite{zamir2018taskonomy} task relationship is estimated based on whether the representation learned for one task can be easily reused for another task, which requires exhaustive enumeration of all the tasks. In~\cite{achille2019task2vec}, the enumeration process is vastly simplified by only considering a local landscape in the parameter space. However, a local landscape may be insufficient to represent the whole parameter distribution, especially in high dimensional cases such as deep networks. Recently, algorithms have been designed to adaptively reweight multiple tasks on the fly. For example, in~\cite{du2018adapting} tasks are filtered out when having opposite gradient direction to the main task. The most similar work to ours is~\cite{lin2019adaptive}, where the task relationship is also estimated from similarity between gradients. However, unlike our method, they use inner product as similarity metric with the goal of speeding up training.



\section{Conclusion}
\label{sec:conclusion}

In this work, we develop ARML, an algorithm to automatically reweight auxiliary tasks, so that the data requirement for the main task is minimized. We first formulate the weighted likelihood function of auxiliary tasks as a surrogate prior for the main task. Then the optimal weights are obtained by minimizing the divergence between the surrogate prior and the true prior. 
We design a practical algorithm by turning the optimization problem into minimizing the distance between main task gradient and auxiliary task gradients.
We demonstrate its effectiveness and robustness in reducing the data requirement under various settings including the extreme case of only a few examples.

\begin{ack}
Prof. Darrell's group was supported in part by DoD, BAIR and BDD. Prof. Saenko was supported by DARPA and NSF. Prof. Hoffman was supported by DARPA. The authors also acknowledge the valuable suggestions from Colorado Reed, Dinghuai Zhang, Qi Dai, and Ziqi Pang. 
\end{ack}

\section*{Broader Impact}

In this work we focus on solving the data scarcity problem of a main task using auxiliary tasks, and propose an algorithm to automatically reweight auxiliary tasks so that the data requirement on the main task is minimized. On the bright side, this could impact the industry and society from two aspects. First, this may promote the landing of machine learning algorithms where labeled data is scarce or even unavailable, which is common in the real world. Second, our method can save the time and power resources wasted for manually tuning the auxiliary task weights with multiple runs, which is crucial in an era of environmental protection.
However, our method may lead to negative consequences if it is not used right. For example, our method may be utilized to extract information from a private dataset or system with less data under the assistance of other auxiliary tasks. Besides, our method may still fail in some situations where the auxiliary tasks are strong regularization of the main task, which may not allow the use in applications where high precision and robustness are imperative.

\bibliography{refs}

\begin{thebibliography}{10}

\bibitem{achille2019task2vec}
Alessandro Achille, Michael Lam, Rahul Tewari, Avinash Ravichandran, Subhransu
  Maji, Charless~C Fowlkes, Stefano Soatto, and Pietro Perona.
\newblock Task2vec: Task embedding for meta-learning.
\newblock In {\em Proceedings of the IEEE International Conference on Computer
  Vision}, pages 6430--6439, 2019.

\bibitem{ando2005framework}
Rie~Kubota Ando and Tong Zhang.
\newblock A framework for learning predictive structures from multiple tasks
  and unlabeled data.
\newblock {\em Journal of Machine Learning Research}, 6(Nov):1817--1853, 2005.

\bibitem{arora2019theoretical}
Sanjeev Arora, Hrishikesh Khandeparkar, Mikhail Khodak, Orestis Plevrakis, and
  Nikunj Saunshi.
\newblock A theoretical analysis of contrastive unsupervised representation
  learning.
\newblock {\em arXiv preprint arXiv:1902.09229}, 2019.

\bibitem{asadi2019towards}
Nader Asadi, Mehrdad Hosseinzadeh, and Mahdi Eftekhari.
\newblock Towards shape biased unsupervised representation learning for domain
  generalization.
\newblock {\em arXiv preprint arXiv:1909.08245}, 2019.

\bibitem{bakker2003task}
Bart Bakker and Tom Heskes.
\newblock Task clustering and gating for bayesian multitask learning.
\newblock {\em Journal of Machine Learning Research}, 4(May):83--99, 2003.

\bibitem{baxter1997bayesian}
Jonathan Baxter.
\newblock A bayesian/information theoretic model of learning to learn via
  multiple task sampling.
\newblock {\em Machine learning}, 28(1):7--39, 1997.

\bibitem{bilen2016integrated}
Hakan Bilen and Andrea Vedaldi.
\newblock Integrated perception with recurrent multi-task neural networks.
\newblock In {\em Advances in neural information processing systems}, pages
  235--243, 2016.

\bibitem{carlucci2019domain}
Fabio~M Carlucci, Antonio D'Innocente, Silvia Bucci, Barbara Caputo, and
  Tatiana Tommasi.
\newblock Domain generalization by solving jigsaw puzzles.
\newblock In {\em Proceedings of the IEEE Conference on Computer Vision and
  Pattern Recognition}, pages 2229--2238, 2019.

\bibitem{chen2020simple}
Ting Chen, Simon Kornblith, Mohammad Norouzi, and Geoffrey Hinton.
\newblock A simple framework for contrastive learning of visual
  representations.
\newblock {\em arXiv preprint arXiv:2002.05709}, 2020.

\bibitem{chen2019closer}
Wei-Yu Chen, Yen-Cheng Liu, Zsolt Kira, Yu-Chiang~Frank Wang, and Jia-Bin
  Huang.
\newblock A closer look at few-shot classification.
\newblock {\em arXiv preprint arXiv:1904.04232}, 2019.

\bibitem{chen2017gradnorm}
Zhao Chen, Vijay Badrinarayanan, Chen-Yu Lee, and Andrew Rabinovich.
\newblock Gradnorm: Gradient normalization for adaptive loss balancing in deep
  multitask networks.
\newblock {\em arXiv preprint arXiv:1711.02257}, 2017.

\bibitem{collobert2008unified}
Ronan Collobert and Jason Weston.
\newblock A unified architecture for natural language processing: Deep neural
  networks with multitask learning.
\newblock In {\em Proceedings of the 25th international conference on Machine
  learning}, pages 160--167, 2008.

\bibitem{devries2017improved}
Terrance DeVries and Graham~W Taylor.
\newblock Improved regularization of convolutional neural networks with cutout.
\newblock {\em arXiv preprint arXiv:1708.04552}, 2017.

\bibitem{du2018adapting}
Yunshu Du, Wojciech~M Czarnecki, Siddhant~M Jayakumar, Razvan Pascanu, and
  Balaji Lakshminarayanan.
\newblock Adapting auxiliary losses using gradient similarity.
\newblock {\em arXiv preprint arXiv:1812.02224}, 2018.

\bibitem{d2018domain}
Antonio D’Innocente and Barbara Caputo.
\newblock Domain generalization with domain-specific aggregation modules.
\newblock In {\em German Conference on Pattern Recognition}, pages 187--198.
  Springer, 2018.

\bibitem{eitz2012humans}
Mathias Eitz, James Hays, and Marc Alexa.
\newblock How do humans sketch objects?
\newblock {\em ACM Transactions on graphics (TOG)}, 31(4):1--10, 2012.

\bibitem{evgeniou2004regularized}
Theodoros Evgeniou and Massimiliano Pontil.
\newblock Regularized multi--task learning.
\newblock In {\em Proceedings of the tenth ACM SIGKDD international conference
  on Knowledge discovery and data mining}, pages 109--117, 2004.

\bibitem{fu2015transductive}
Yanwei Fu, Timothy~M Hospedales, Tao Xiang, and Shaogang Gong.
\newblock Transductive multi-view zero-shot learning.
\newblock {\em IEEE transactions on pattern analysis and machine intelligence},
  37(11):2332--2345, 2015.

\bibitem{grandvalet2005semi}
Yves Grandvalet and Yoshua Bengio.
\newblock Semi-supervised learning by entropy minimization.
\newblock In {\em Advances in neural information processing systems}, pages
  529--536, 2005.

\bibitem{griffin2007caltech}
Gregory Griffin, Alex Holub, and Pietro Perona.
\newblock Caltech-256 object category dataset.
\newblock 2007.

\bibitem{gutmann2010noise}
Michael Gutmann and Aapo Hyv{\"a}rinen.
\newblock Noise-contrastive estimation: A new estimation principle for
  unnormalized statistical models.
\newblock In {\em Proceedings of the Thirteenth International Conference on
  Artificial Intelligence and Statistics}, pages 297--304, 2010.

\bibitem{he2016deep}
Kaiming He, Xiangyu Zhang, Shaoqing Ren, and Jian Sun.
\newblock Deep residual learning for image recognition.
\newblock In {\em Proceedings of the IEEE conference on computer vision and
  pattern recognition}, pages 770--778, 2016.

\bibitem{heskes2000empirical}
TM~Heskes.
\newblock Empirical bayes for learning to learn.
\newblock In {\em Proceedings of the 17th international conference on Machine
  learning}, pages 364--367, 2000.

\bibitem{hoffman2016learning}
Judy Hoffman, Saurabh Gupta, and Trevor Darrell.
\newblock Learning with side information through modality hallucination.
\newblock In {\em Proceedings of the IEEE Conference on Computer Vision and
  Pattern Recognition}, pages 826--834, 2016.

\bibitem{hu2019learning}
Hanzhang Hu, Debadeepta Dey, Martial Hebert, and J~Andrew Bagnell.
\newblock Learning anytime predictions in neural networks via adaptive loss
  balancing.
\newblock In {\em Proceedings of the AAAI Conference on Artificial
  Intelligence}, volume~33, pages 3812--3821, 2019.

\bibitem{hu2018stein}
Tianyang Hu, Zixiang Chen, Hanxi Sun, Jincheng Bai, Mao Ye, and Guang Cheng.
\newblock Stein neural sampler.
\newblock {\em arXiv preprint arXiv:1810.03545}, 2018.

\bibitem{huang2013cross}
Jui-Ting Huang, Jinyu Li, Dong Yu, Li~Deng, and Yifan Gong.
\newblock Cross-language knowledge transfer using multilingual deep neural
  network with shared hidden layers.
\newblock In {\em 2013 IEEE International Conference on Acoustics, Speech and
  Signal Processing}, pages 7304--7308. IEEE, 2013.

\bibitem{hyvarinen2005estimation}
Aapo Hyv{\"a}rinen.
\newblock Estimation of non-normalized statistical models by score matching.
\newblock {\em Journal of Machine Learning Research}, 6(Apr):695--709, 2005.

\bibitem{ioffe2015batch}
Sergey Ioffe and Christian Szegedy.
\newblock Batch normalization: Accelerating deep network training by reducing
  internal covariate shift.
\newblock {\em arXiv preprint arXiv:1502.03167}, 2015.

\bibitem{jaderberg2016reinforcement}
Max Jaderberg, Volodymyr Mnih, Wojciech~Marian Czarnecki, Tom Schaul, Joel~Z
  Leibo, David Silver, and Koray Kavukcuoglu.
\newblock Reinforcement learning with unsupervised auxiliary tasks.
\newblock {\em arXiv preprint arXiv:1611.05397}, 2016.

\bibitem{kendall2018multi}
Alex Kendall, Yarin Gal, and Roberto Cipolla.
\newblock Multi-task learning using uncertainty to weigh losses for scene
  geometry and semantics.
\newblock In {\em Proceedings of the IEEE conference on computer vision and
  pattern recognition}, pages 7482--7491, 2018.

\bibitem{kingma2014adam}
Diederik~P Kingma and Jimmy Ba.
\newblock Adam: A method for stochastic optimization.
\newblock {\em arXiv preprint arXiv:1412.6980}, 2014.

\bibitem{krizhevsky2009learning}
Alex Krizhevsky, Geoffrey Hinton, et~al.
\newblock Learning multiple layers of features from tiny images.
\newblock 2009.

\bibitem{laine2016temporal}
Samuli Laine and Timo Aila.
\newblock Temporal ensembling for semi-supervised learning.
\newblock {\em arXiv preprint arXiv:1610.02242}, 2016.

\bibitem{lee2013pseudo}
Dong-Hyun Lee.
\newblock Pseudo-label: The simple and efficient semi-supervised learning
  method for deep neural networks.
\newblock In {\em Workshop on challenges in representation learning, ICML},
  volume~3, page~2, 2013.

\bibitem{li2017deeper}
Da~Li, Yongxin Yang, Yi-Zhe Song, and Timothy~M Hospedales.
\newblock Deeper, broader and artier domain generalization.
\newblock In {\em Proceedings of the IEEE international conference on computer
  vision}, pages 5542--5550, 2017.

\bibitem{lin2019adaptive}
Xingyu Lin, Harjatin Baweja, George Kantor, and David Held.
\newblock Adaptive auxiliary task weighting for reinforcement learning.
\newblock In {\em Advances in Neural Information Processing Systems}, pages
  4773--4784, 2019.

\bibitem{liu2016kernelized}
Qiang Liu, Jason Lee, and Michael Jordan.
\newblock A kernelized stein discrepancy for goodness-of-fit tests.
\newblock In {\em International conference on machine learning}, pages
  276--284, 2016.

\bibitem{liu2016stein}
Qiang Liu and Dilin Wang.
\newblock Stein variational gradient descent: A general purpose bayesian
  inference algorithm.
\newblock In {\em Advances in neural information processing systems}, pages
  2378--2386, 2016.

\bibitem{liu2015deep}
Ziwei Liu, Ping Luo, Xiaogang Wang, and Xiaoou Tang.
\newblock Deep learning face attributes in the wild.
\newblock In {\em Proceedings of the IEEE international conference on computer
  vision}, pages 3730--3738, 2015.

\bibitem{long2017learning}
Mingsheng Long, Zhangjie Cao, Jianmin Wang, and S~Yu Philip.
\newblock Learning multiple tasks with multilinear relationship networks.
\newblock In {\em Advances in neural information processing systems}, pages
  1594--1603, 2017.

\bibitem{maas2013rectifier}
Andrew~L Maas, Awni~Y Hannun, and Andrew~Y Ng.
\newblock Rectifier nonlinearities improve neural network acoustic models.
\newblock In {\em Proc. icml}, volume~30, page~3, 2013.

\bibitem{mirowski2016learning}
Piotr Mirowski, Razvan Pascanu, Fabio Viola, Hubert Soyer, Andrew~J Ballard,
  Andrea Banino, Misha Denil, Ross Goroshin, Laurent Sifre, Koray Kavukcuoglu,
  et~al.
\newblock Learning to navigate in complex environments.
\newblock {\em arXiv preprint arXiv:1611.03673}, 2016.

\bibitem{motiian2017few}
Saeid Motiian, Quinn Jones, Seyed Iranmanesh, and Gianfranco Doretto.
\newblock Few-shot adversarial domain adaptation.
\newblock In {\em Advances in Neural Information Processing Systems}, pages
  6670--6680, 2017.

\bibitem{murray2004bayesian}
Iain Murray and Zoubin Ghahramani.
\newblock Bayesian learning in undirected graphical models: approximate mcmc
  algorithms.
\newblock In {\em Proceedings of the 20th conference on Uncertainty in
  artificial intelligence}, pages 392--399. AUAI Press, 2004.

\bibitem{neal2011mcmc}
Radford~M Neal et~al.
\newblock Mcmc using hamiltonian dynamics.
\newblock {\em Handbook of markov chain monte carlo}, 2(11):2, 2011.

\bibitem{netzer2011reading}
Yuval Netzer, Tao Wang, Adam Coates, Alessandro Bissacco, Bo~Wu, and Andrew~Y
  Ng.
\newblock Reading digits in natural images with unsupervised feature learning.
\newblock 2011.

\bibitem{oliver2018realistic}
Avital Oliver, Augustus Odena, Colin~A Raffel, Ekin~Dogus Cubuk, and Ian
  Goodfellow.
\newblock Realistic evaluation of deep semi-supervised learning algorithms.
\newblock In {\em Advances in Neural Information Processing Systems}, pages
  3235--3246, 2018.

\bibitem{oord2018representation}
Aaron van~den Oord, Yazhe Li, and Oriol Vinyals.
\newblock Representation learning with contrastive predictive coding.
\newblock {\em arXiv preprint arXiv:1807.03748}, 2018.

\bibitem{paszke2019pytorch}
Adam Paszke, Sam Gross, Francisco Massa, Adam Lerer, James Bradbury, Gregory
  Chanan, Trevor Killeen, Zeming Lin, Natalia Gimelshein, Luca Antiga, et~al.
\newblock Pytorch: An imperative style, high-performance deep learning library.
\newblock In {\em Advances in Neural Information Processing Systems}, pages
  8024--8035, 2019.

\bibitem{ruder2017overview}
Sebastian Ruder.
\newblock An overview of multi-task learning in deep neural networks.
\newblock {\em arXiv preprint arXiv:1706.05098}, 2017.

\bibitem{sangkloy2016sketchy}
Patsorn Sangkloy, Nathan Burnell, Cusuh Ham, and James Hays.
\newblock The sketchy database: learning to retrieve badly drawn bunnies.
\newblock {\em ACM Transactions on Graphics (TOG)}, 35(4):1--12, 2016.

\bibitem{sener2018multi}
Ozan Sener and Vladlen Koltun.
\newblock Multi-task learning as multi-objective optimization.
\newblock In {\em Advances in Neural Information Processing Systems}, pages
  527--538, 2018.

\bibitem{sharmanska2013learning}
Viktoriia Sharmanska, Novi Quadrianto, and Christoph~H Lampert.
\newblock Learning to rank using privileged information.
\newblock In {\em Proceedings of the IEEE International Conference on Computer
  Vision}, pages 825--832, 2013.

\bibitem{shelhamer2016loss}
Evan Shelhamer, Parsa Mahmoudieh, Max Argus, and Trevor Darrell.
\newblock Loss is its own reward: Self-supervision for reinforcement learning.
\newblock {\em arXiv preprint arXiv:1612.07307}, 2016.

\bibitem{tarvainen2017mean}
Antti Tarvainen and Harri Valpola.
\newblock Mean teachers are better role models: Weight-averaged consistency
  targets improve semi-supervised deep learning results.
\newblock In {\em Advances in neural information processing systems}, pages
  1195--1204, 2017.

\bibitem{tzeng2015simultaneous}
Eric Tzeng, Judy Hoffman, Trevor Darrell, and Kate Saenko.
\newblock Simultaneous deep transfer across domains and tasks.
\newblock In {\em Proceedings of the IEEE International Conference on Computer
  Vision}, pages 4068--4076, 2015.

\bibitem{tzeng2017adversarial}
Eric Tzeng, Judy Hoffman, Kate Saenko, and Trevor Darrell.
\newblock Adversarial discriminative domain adaptation.
\newblock In {\em Proceedings of the IEEE Conference on Computer Vision and
  Pattern Recognition}, pages 7167--7176, 2017.

\bibitem{vapnik2009new}
Vladimir Vapnik and Akshay Vashist.
\newblock A new learning paradigm: Learning using privileged information.
\newblock {\em Neural networks}, 22(5-6):544--557, 2009.

\bibitem{welling2011bayesian}
Max Welling and Yee~W Teh.
\newblock Bayesian learning via stochastic gradient langevin dynamics.
\newblock In {\em Proceedings of the 28th international conference on machine
  learning (ICML-11)}, pages 681--688, 2011.

\bibitem{xue2007multi}
Ya~Xue, Xuejun Liao, Lawrence Carin, and Balaji Krishnapuram.
\newblock Multi-task learning for classification with dirichlet process priors.
\newblock {\em Journal of Machine Learning Research}, 8(Jan):35--63, 2007.

\bibitem{yu2005learning}
Kai Yu, Volker Tresp, and Anton Schwaighofer.
\newblock Learning gaussian processes from multiple tasks.
\newblock In {\em Proceedings of the 22nd international conference on Machine
  learning}, pages 1012--1019, 2005.

\bibitem{zamir2018taskonomy}
Amir~R Zamir, Alexander Sax, William Shen, Leonidas~J Guibas, Jitendra Malik,
  and Silvio Savarese.
\newblock Taskonomy: Disentangling task transfer learning.
\newblock In {\em Proceedings of the IEEE Conference on Computer Vision and
  Pattern Recognition}, pages 3712--3722, 2018.

\bibitem{zhai2019s4l}
Xiaohua Zhai, Avital Oliver, Alexander Kolesnikov, and Lucas Beyer.
\newblock S4l: Self-supervised semi-supervised learning.
\newblock In {\em Proceedings of the IEEE international conference on computer
  vision}, pages 1476--1485, 2019.

\bibitem{zhang2010convex}
Yu~Zhang and Dit-Yan Yeung.
\newblock A convex formulation for learning task relationships in multi-task
  learning.
\newblock In {\em Proceedings of the Twenty-Sixth Conference on Uncertainty in
  Artificial Intelligence}, pages 733--742, 2010.

\end{thebibliography}
\bibliographystyle{plain}

\appendix
\section{Additional Discussion on ARML}

In this section we add more discussion on validity and soundness of ARML, especially on the three problems (\textbf{True Prior (P1)}, \textbf{Samples (P2)}, \textbf{Partition Function (P3)}), and how we resolve them (Sec. 3.3). 


\subsection{Full Version and Proof of Theorem 1 (P1)}

In \textbf{True Prior (P1)} (Sec. 3.3) we use 
\begin{equation}\small
\label{eq:objective_approx}
    \min_{\bm{\alpha}} E_{\theta \sim p^J} \log \frac{p^m(\theta)}{p_{\bm{\alpha}} (\theta)}
\end{equation}
as a surrogate objective for the original optimization problem
\begin{equation}\small
\label{eq:objective}
    \min_{\bm{\alpha}} D_{\mathrm{KL}}\infdivx{p^\ast (\theta)}{p_{\bm{\alpha}}(\theta)}.
\end{equation}
In this section, we will first intuitively explain why optimizing~\eqref{eq:objective_approx} can end up with a near-optimal solution for~\eqref{eq:objective}, and what assumptions do we need to make. Then we will give the full version of Theorem 1 and also the proof.

Let $f(\a) = E_{\theta \sim p^J} \log \frac{p^m(\theta)}{p_{\a}  (\theta)} = \frac{1}{Z(\a)} \int p^m(\theta) p_{\a}(\theta) \log \frac{p^m(\theta)}{p_{\a}  (\theta)} d\theta$ be the optimization objective in~\eqref{eq:objective_approx}, where $p^J(\theta) = \frac{p^m(\theta) p_{\a}(\theta)}{Z(\a)}$ and $Z(\a) = \int p^m(\theta) p_{\a}(\theta) d\theta$ is the normalization term. Assume $p^\ast(\theta)$ has a compact support set $S$. Then we can write $f(\a)$ as
\begin{equation}\small
\begin{split}
    f(\a) &= 
    \frac{1}{Z(\a)} \int_{\theta \in S} p^m(\theta) p_{\a}(\theta) \log \frac{p^m(\theta)}{p_{\a}  (\theta)} d\theta + 
    \frac{1}{Z(\a)} \int_{\theta \notin S} p^m(\theta) p_{\a}(\theta) \log \frac{p^m(\theta)}{p_{\a}  (\theta)} d\theta \\
    &= 
    \frac{Z(S; \a)}{Z(S; \a) + Z(\Sbar; \a)} \int_{\theta \in S} \frac{p^m(\theta) p_{\a}(\theta)}{Z(S; \a)} \log \frac{p^m(\theta)}{p_{\a}  (\theta)} d\theta \\
    & \quad \ + \frac{Z(\Sbar; \a)}{Z(S; \a) + Z(\Sbar; \a)} \int_{\theta \notin S} \frac{p^m(\theta) p_{\a}(\theta)}{Z(\Sbar; \a)} \log \frac{p^m(\theta)}{p_{\a}  (\theta)} d\theta \\
    &= f(\a; S) + f(\a; \Sbar),
\end{split}
\end{equation}
where we denote the first and second term by $f(\a; S)$ and $f(\a; \Sbar)$ respectively, $Z(S; \a) = \int_{\theta \in S} p^m(\theta) p_{\a}(\theta) d\theta$ and $Z(\Sbar; \a) = \int_{\theta \notin S} p^m(\theta) p_{\a}(\theta) d\theta$ are the normalization terms inside and outside $S$.

To build the connection between the surrogate objective $f(\a)$ and the original objective $\KLa := D_{\mathrm{KL}}\infdivx{p^\ast (\theta)}{p_{\bm{\alpha}}(\theta)}$, we make the following assumption,
\begin{assumption}
\label{assumption:1}
The support set $S$ is small so that $p_{\a} (\theta)$ and $p^m(\theta)$ are constants inside S, and $p^\ast(\theta)$ is uniform in $S$.
\end{assumption}
This assumption is reasonable when $S$ is really informative, which we assume is the case for the true prior $p^\ast(\theta)$~\cite{baxter1997bayesian}. With this assumption, we have
\begin{equation}\small
    \KLa = \Sint p^\ast(\theta) \log \frac{p^\ast(\theta)}{p_{\a} (\theta)} d\theta
    = \log \frac{p^\ast(\theta^\ast)}{p_{\a} (\theta^\ast)} \cdot \Sint p^\ast(\theta) d\theta = \log \frac{p^\ast(\theta^\ast)}{p_{\a} (\theta^\ast)},
\end{equation}
where $\theta^\ast \in S$ is the optimal parameter. We can also write $f(\a; S)$ as
\begin{equation}\small
\begin{split}
    f(\a; S) &= \frac{Z(S; \a)}{Z(S; \a) + Z(\Sbar; \a)} \int_{\theta \in S} \frac{p^m(\theta) p_{\a}(\theta)}{Z(S; \a)} \log \frac{p^m(\theta)}{p_{\a}  (\theta)} d\theta \\
    &= \frac{Z(S; \a)}{Z(S; \a) + Z(\Sbar; \a)}  \log \frac{p^m(\theta^\ast)}{p_{\a}  (\theta^\ast)} \cdot 
    \Sint \frac{p^m(\theta) p_{\a}(\theta)}{Z(S; \a)} d\theta \\
    &= \frac{Z(S; \a)}{Z(S; \a) + Z(\Sbar; \a)}  \log \frac{p^m(\theta^\ast)}{p_{\a}  (\theta^\ast)} \\
    &= \frac{Z(S; \a)}{Z(S; \a) + Z(\Sbar; \a)} (\log \frac{p^\ast(\theta^\ast)}{p_{\a}  (\theta^\ast)} + 
    \log \frac{p^m(\theta^\ast)}{p^\ast  (\theta^\ast)}) \\
    &= \frac{Z(S; \a)}{Z(S; \a) + Z(\Sbar; \a)} (\KLa + 
    C_1),
\end{split}
\end{equation}
where $C_1 = \log \frac{p^m(\theta^\ast)}{p^\ast  (\theta^\ast)}$ is a constant invariant to $\a$. Since $p^m(\theta)$ also covers other ``overfitting'' area other than $S$, we can assume that $p^\ast(\theta^\ast) \geq p^m(\theta^\ast)$, which gives $C_1 \leq 0$. Furthermore, we can notice that
\begin{equation}\small
    Z(S; \a) = \Sint p^m(\theta) p_{\a}(\theta) d\theta 
    = \Sint \frac{p^m(\theta) p_{\a}(\theta)}{p^\ast (\theta)} p^\ast (\theta) d\theta 
    = \frac{p^m(\theta^\ast) p_{\a}(\theta^\ast)}{p^\ast (\theta^\ast)} = C_2 e^{-\KLa},
\end{equation}
where $C_2 = p^m(\theta^\ast)$ is a constant invariant to $\a$. Then we can write $f(\a; S)$ as 
\begin{equation}\small
    f(\a; S) = \frac{C_2 e^{-\KLa}}{C_2 e^{-\KLa} + Z(\Sbar; \a)} (\KLa + 
    C_1).
\end{equation}
In this way, we build the connection between the surrogate objective $f(\a)$ and the original objective $\KLa$.

Now we give an intuitive explanation for why optimizing $f(\a)$ gives a small $\KLa$ as well. We can write $f(\a)$ as 
\begin{equation}\small
\begin{split}
    f(\a) &= f(\a; S) + f(\a; \Sbar) \\
    &= \frac{C_2 e^{-\KLa}}{C_2 e^{-\KLa} + Z(\Sbar; \a)} (\KLa + C_1) 
    + \frac{Z(\Sbar; \a)}{C_2 e^{-\KLa} + Z(\Sbar; \a)} \Sbarint \frac{p^m(\theta) p_{\a}(\theta)}{Z(\Sbar; \a)} \log \frac{p^m(\theta)}{p_{\a}  (\theta)} d\theta.
\end{split}
\end{equation}
As one can notice, $f(\a)$ not only depends on $\KLa$, but also on $Z(\Sbar; \a)$ and the integral $\Sbarint \frac{p^m(\theta) p_{\a}(\theta)}{Z(\Sbar; \a)} \log \frac{p^m(\theta)}{p_{\a}  (\theta)} d\theta$. First we remove the dependency on the integral by taking its lower bound and upper bound. Concretely, with Jensen's inequality, we have
\begin{equation}\small
    \Sbarint \frac{p^m(\theta) p_{\a}(\theta)}{Z(\Sbar; \a)} \log \frac{p^m(\theta)}{p_{\a}  (\theta)} d\theta 
    \leq 
    \log \frac{\Sbarint (p^m(\theta))^2 d\theta}{Z(\Sbar; \a)} = \log \frac{C_3}{Z(\Sbar; \a)},
\end{equation}
where $C_3 = \Sbarint (p^m(\theta))^2 d\theta$ is a constant invariant to $\a$. Likewise, we have
\begin{equation}\small
\begin{split}
    \Sbarint \frac{p^m(\theta) p_{\a}(\theta)}{Z(\Sbar; \a)} \log \frac{p^m(\theta)}{p_{\a}  (\theta)} d\theta 
    &= 
    \Sbarint - \frac{p^m(\theta) p_{\a}(\theta)}{Z(\Sbar; \a)} \log \frac{p_{\a} (\theta)}{p^m(\theta)} d\theta \\
    &\geq 
    - \log \frac{\Sbarint (p_{\a}(\theta))^2 d\theta}{Z(\Sbar; \a)} \\
    &\geq 
    - \log \frac{C_4}{Z(\Sbar; \a)},
\end{split}
\end{equation}
where $C_4 = \max_{\a} \Sbarint (p_{\a}(\theta))^2 d\theta$ is a constant invariant to $\a$. In this way, we get the lower bound and upper bound for $f(\a)$:
\begin{equation}\small
\begin{split}
    f(\a) &\geq f_l (\a) = \frac{C_2 e^{-\KLa}}{C_2 e^{-\KLa} + Z(\Sbar; \a)} (\KLa + C_1) - \frac{Z(\Sbar; \a)}{C_2 e^{-\KLa} + Z(\Sbar; \a)} \log \frac{C_4}{Z(\Sbar; \a)}, \\
    f(\a) &\leq f_u (\a) =
    \frac{C_2 e^{-\KLa}}{C_2 e^{-\KLa} + Z(\Sbar; \a)} (\KLa + C_1) + \frac{Z(\Sbar; \a)}{C_2 e^{-\KLa} + Z(\Sbar; \a)} \log \frac{C_3}{Z(\Sbar; \a)}.
\end{split}
\end{equation}

We plot $f_l$ and $f_u$ as functions of $e^{-\KLa}$ in Fig.~\ref{fig:upper&lower} (here we assume $Z(\Sbar; \a)$ is constant \wrt $\a$ for brevity). $f(\a)$ lies between the upper bound (golden line) and the lower bound (blue line).

\begin{figure}[t]
\vspace{-1em}
\begin{subfigure}[b]{0.48\linewidth}
  \centering
  \includegraphics[width=1.0\linewidth]{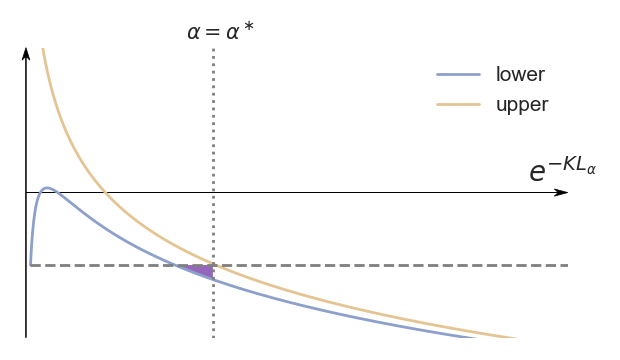}  
  \caption{$e^{-\KLastar}$ is large.}
  \label{fig:upper&lower_a}
\end{subfigure}
\begin{subfigure}[b]{0.48\textwidth}
  \centering
  \includegraphics[width=1.0\linewidth]{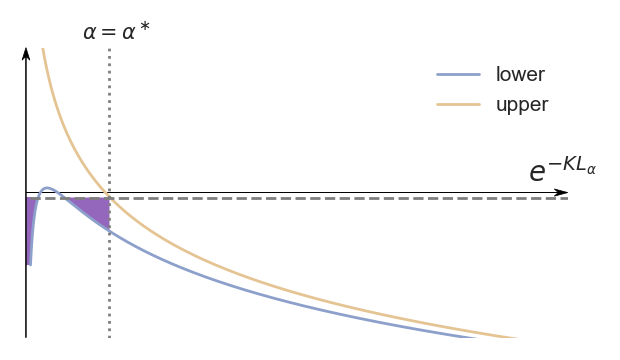}  
  \caption{$e^{-\KLastar}$ is small.}
  \label{fig:upper&lower_b}
\end{subfigure}
\caption{$f(e^{-\KLa})$'s upper bound $f_u(e^{-\KLa})$ (golden line) and lower bound $f_l(e^{-\KLa})$ (blue line). $\astar = \argmax_{\a} (e^{-\KLa}) = \argmin_{\a} \KLa$ denotes the largest $e^{-\KLa}$ we could possibly reach. Shaded region denotes where ($e^{-\KLahat}$, $f(e^{-\KLahat})$) could possibly be.}
\label{fig:upper&lower}
\end{figure}

Our goal is to find the optimal $\astar$ that minimizes $\KLa$, \ie, $\astar = \argmin_{\a} \KLa = \argmax_{\a} e^{-\KLa}$. By optimizing $f(\a)$, we end up with a suboptimal $\ahat = \argmin_{\a} f(\a)$. Ideally, we hope that $\KLahat$ is close to $\KLastar$, which means when we minimize $f(\ahat)$, we can also get a large $e^{-\KLahat}$. This is the case when $e^{-\KLastar}$ is large (see Fig.~\ref{fig:upper&lower_a}). When $e^{-\KLastar}$ is large, the upper bound $f_u$ and the lower bound $f_l$ are close to each other around $e^{-\KLastar}$ (this is the case when $Z(\Sbar; \a)$ is small). Since we have
\begin{equation}\small
\label{eq:lie_in_region}
    f_l(e^{-\KLahat}) \leq f(e^{-\KLahat}) \leq f(e^{-\KLastar}) \leq f_u(e^{-\KLastar}),
\end{equation}
we can assert that ($e^{-\KLahat}$, $f(e^{-\KLahat})$) lies in the shaded region, because if $e^{-\KLahat}$ is on the left side of the region, we have $f(e^{-\KLahat}) \geq f_u(e^{-\KLastar})$ which is contradictary to~\eqref{eq:lie_in_region}, and if $e^{-\KLahat}$ cannot be on the right side of the region because $e^{-\KLastar}$ is the furthest we can go. Since the shaded region is small, $\KLahat$ is thus close to the optimal solution $\KLastar$.

Unfortunately, this may not hold anymore when $e^{-\KLastar}$ is small (see Fig.~\ref{fig:upper&lower_b}). This is because $f_l$ will reach a local minima when $e^{-\KLa} \to 0$. If $e^{-\KLastar}$ is not large enough, it may be higher than $\lim_{e^{-\KLa} \to 0} f_l(e^{-\KLa})$, which means the shaded region near y-axis is also included. In this region $f(\a)$ could be really small (which is the goal when optimizing the surrogate objective $f(\a)$), but $\KLa$ could be extremely large.

To avoid this situation, we only have to assume that 
\begin{equation}\small
    f_u(e^{-\KLastar}) \leq \lim_{e^{-\KLa} \to 0} f_l(e^{-\KLa}) = - \log \frac{C_4}{Z(\Sbar; \a)},
\end{equation}
or if we denote $\gamma_1 = \min_{\a} Z(\Sbar; \a)$ and $\gamma_2 = \max_{\a} Z(\Sbar; \a)$, then we only need the following assumption:
\begin{assumption}
\label{assumption:2}
The optimal $\KLastar$ is small so that $f_u(e^{-\KLastar}) \leq -\log \frac{C_4}{\gamma_1}$.
\end{assumption}
This assumption holds as long as there is at least one task that is related to the main task (having a small $\KLa$), which is reasonable because if all the tasks are unrelated, then reweighing is also meaningless. See the remark below for more discussion on the validity of the assumption.

Now we give the formal version of the theorem:
\begin{theorem}
\label{thrm:full}
(formal version) With Assumption \hyperref[assumption:1]{1}, \hyperref[assumption:2]{2}, if $\gamma_2 \leq \min (\frac{C_3}{e}, \frac{C_4}{e})$, then we have
\begin{equation}\small
    \KLahat
    \leq 
    \KLastar + \frac{2\gamma_2^2}{C} \log \frac{C^\prime}{\gamma_2}
\end{equation}
\end{theorem}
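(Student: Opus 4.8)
The plan is to combine the upper and lower bounds for $f(\a)$ that were derived just above the theorem statement, and turn the qualitative argument accompanying Fig.~\ref{fig:upper&lower_a} into a quantitative estimate. Write $x = e^{-\KLa}$, so that $x \in (0, x^\ast]$ with $x^\ast = e^{-\KLastar}$ the largest attainable value. From the displayed bounds I have $f(\a) \le f_u(x) = \frac{C_2 x}{C_2 x + Z(\Sbar;\a)}(\KLa + C_1) + \frac{Z(\Sbar;\a)}{C_2 x + Z(\Sbar;\a)} \log \frac{C_3}{Z(\Sbar;\a)}$ and $f(\a) \ge f_l(x) = \frac{C_2 x}{C_2 x + Z(\Sbar;\a)}(\KLa + C_1) - \frac{Z(\Sbar;\a)}{C_2 x + Z(\Sbar;\a)} \log \frac{C_4}{Z(\Sbar;\a)}$. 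The chain of inequalities~\eqref{eq:lie_in_region}, namely $f_l(e^{-\KLahat}) \le f(e^{-\KLahat}) \le f(e^{-\KLastar}) \le f_u(e^{-\KLastar})$, is the engine: it says $f_l$ evaluated at the point we actually reach is at most $f_u$ evaluated at the optimum, and I want to deduce from this that $\KLahat$ is not much larger than $\KLastar$.

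First I would bound the two sides explicitly. For the right-hand side, evaluate $f_u$ at $x^\ast$ and use $C_1 \le 0$ (established in the text via $p^\ast(\theta^\ast) \ge p^m(\theta^\ast)$) to drop the $(\KLa + C_1)$ contribution as being $\le \KLastar$ times a factor in $[0,1]$; the condition $\gamma_2 \le C_3/e$ ensures $\log(C_3/Z(\Sbar;\a)) \ge 1 > 0$ so the second term is a genuine positive quantity controlled by $\gamma_2 = \max_\a Z(\Sbar;\a)$. For the left-hand side, at $\ahat$ write $\hat x = e^{-\KLahat}$ and keep $f_l(\hat x)$ in terms of $\hat x$; the term $-\frac{Z(\Sbar;\ahat)}{C_2\hat x + Z(\Sbar;\ahat)}\log\frac{C_4}{Z(\Sbar;\ahat)}$ is the one that diverges to a finite negative limit $-\log(C_4/\gamma_1)$-ish as $\hat x \to 0$, and Assumption~\ref{assumption:2} is exactly what prevents $f_u(x^\ast)$ from dipping below that limiting value — so the inequality $f_l(\hat x) \le f_u(x^\ast)$ genuinely constrains $\hat x$ to be bounded away from $0$, equivalently $\KLahat$ to be finite. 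Then I would isolate $\KLahat$: rearranging $f_l(\hat x) \le f_u(x^\ast)$ and using $\frac{C_2\hat x}{C_2\hat x + Z(\Sbar;\ahat)} \le 1$ together with monotonicity estimates on $x \mapsto \frac{Z}{C_2 x + Z}$, I should land on something of the shape $\KLahat \le \KLastar + (\text{const})\cdot \gamma_2^2 \log(\text{const}/\gamma_2)$. The quadratic $\gamma_2^2$ appears because $Z(\Sbar;\a) = \gamma$-order is being weighted against $C_2 e^{-\KLa}$ which is itself $\Theta(1)$ when $\KLa$ is small, and the factor $\frac{Z}{C_2 x + Z}$ is $\Theta(\gamma)$ while the $\log(C_3/\gamma)$ or $\log(C_4/\gamma)$ brings the $\log(1/\gamma)$; carefully the product of two $\Theta(\gamma)$-type suppressions (one from each side of~\eqref{eq:lie_in_region}) yields $\gamma^2$, and $\gamma_2 \le C_4/e$ keeps $\log(C_4/\gamma_2) \ge 1$ so the error term is nonnegative and the stated form is consistent.

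The step I expect to be the main obstacle is making the "$\KLahat$ lies in the shaded region" argument rigorous — i.e., converting the geometric picture into an algebraic inequality with the precise constants $C$ and $C'$. Concretely, one must show that $f_l(\hat x) \le f_u(x^\ast)$, combined with the fact that $f_l$ is increasing on the relevant range and that its value near $x^\ast$ is pinned close to $\KLastar$ (up to $O(\gamma_2)$), forces $\hat x$ to be within $O(\gamma_2^2)$-in-exponent of $x^\ast$. This requires a careful monotonicity/convexity analysis of $f_l$ as a function of $x$ on $(0, x^\ast]$: verifying it has the single-well shape drawn in Fig.~\ref{fig:upper&lower_a}, locating or bounding its minimizer, and controlling the gap $f_u(x^\ast) - f_l(x^\ast)$, which is exactly $\frac{\gamma}{C_2 x^\ast + \gamma}(\log\frac{C_3}{\gamma} + \log\frac{C_4}{\gamma}) = O(\gamma\log(1/\gamma))$. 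I would handle it by first showing $f_l(x^\ast) \ge \KLastar - O(\gamma_2\log(1/\gamma_2))$ and $f_u(x^\ast) \le \KLastar + O(\gamma_2\log(1/\gamma_2))$ — wait, more care is needed since we want a $\gamma^2$ bound, so I would instead expand $f_l(\hat x) - \KLahat = -\frac{Z(\Sbar;\ahat)}{C_2\hat x + Z(\Sbar;\ahat)}(\KLahat + C_1) - \frac{Z(\Sbar;\ahat)}{C_2\hat x + Z(\Sbar;\ahat)}\log\frac{C_4}{Z(\Sbar;\ahat)}$ and bound each piece, noting that when $\KLahat$ is large the prefactor $\frac{Z}{C_2\hat x + Z} = \frac{Z}{C_2 e^{-\KLahat} + Z}$ is itself close to $1$, which is what produces the self-consistent estimate tying $\KLahat$ back to $\gamma_2^2$ rather than $\gamma_2$. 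Substituting $x^\ast = e^{-\KLastar}$ and using $C_2 x^\ast \le C_2$ along with Assumption~\ref{assumption:2} to absorb the lower-order terms, the constants $C$ and $C'$ emerge as explicit combinations of $C_2$, $C_3$, $C_4$ and $x^\ast$.
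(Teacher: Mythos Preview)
Your overall plan --- start from the chain $f_l(e^{-\KLahat}) \le f(e^{-\KLahat}) \le f(e^{-\KLastar}) \le f_u(e^{-\KLastar})$ and rearrange --- is exactly what the paper does. But you have not identified the mechanism that produces the $\gamma_2^2$, and the mechanisms you do suggest (a ``self-consistent estimate'' when $\KLahat$ is large, or a monotonicity/convexity analysis of $f_l$) are off-track.

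The missing idea is a two-pass argument. \emph{First pass}: use Assumption~2 alone to get a crude quantitative lower bound on $\hat x = e^{-\KLahat}$ (and on $x^\ast$). Concretely, from $f_u(x^\ast) \le -\log(C_4/\gamma_1)$ and sign considerations one obtains $\KLastar + C_1 \le -\log(C_4/\gamma_1)$, i.e.\ $C_2 x^\ast \ge C_5/\gamma_1$ with $C_5 = C_2 C_4 e^{C_1}$; then feeding $f_l(\hat x)\le -\log(C_4/\gamma_1)$ through the same manipulation gives $\KLahat + C_1 \le -\log(C_4/\gamma_2)$, hence $C_2 \hat x \ge C_5/\gamma_2$. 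So $\hat x$ is not merely ``bounded away from $0$'' --- it is \emph{large}, of order $1/\gamma_2$. \emph{Second pass}: now write out $f_l(\hat x) \le f_u(x^\ast)$ in full and bound the error prefactors $\tfrac{Z(\Sbar;\a)}{C_2 x + Z(\Sbar;\a)}$ using the first-pass bounds on the denominators: each such prefactor is at most $\gamma_2 / (C_5/\gamma_2) = \gamma_2^2/C_5$. That, together with $\KLahat + C_1 \le 0$ (so the leading prefactor $\le 1$ can be dropped in the right direction) and the monotonicity of $z\mapsto z\log(C/z)$ on $(0,C/e]$ (this is the only place the hypothesis $\gamma_2 \le \min(C_3/e,C_4/e)$ is used), yields the stated bound with explicit constants $C = C_5$ and $C' = \sqrt{C_3 C_4 e^{-C_1}}$.

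Your remark that ``when $\KLahat$ is large the prefactor is close to $1$'' points in the wrong direction: Assumption~2 forces $\KLahat$ to be \emph{small} (indeed $C_2\hat x \gtrsim 1/\gamma_2$), which makes the prefactor $O(\gamma_2^2)$, not $1$. No shape analysis of $f_l$ is needed; the proof is purely algebraic once the first-pass bounds are in hand.
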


\begin{proof}

From Assumption~\ref{assumption:2} we have
\begin{equation}\small
\label{eq:a15}
    \frac{C_2 e^{-\KLastar}}{C_2 e^{-\KLastar} + Z(\Sbar; \astar)} (\KLastar + C_1) + \frac{Z(\Sbar; \astar)}{C_2 e^{-\KLastar} + Z(\Sbar; \astar)} \log \frac{C_3}{Z(\Sbar; \astar)} \leq -\log \frac{C_4}{\gamma_1}.
\end{equation}
Since $\gamma_2 \leq  C_3$ and $\gamma_2 \leq C_4$, we have $\log \frac{C_3}{Z(\Sbar; \astar)} \geq \log \frac{C_3}{\gamma_2} \geq 0$, and $-\log \frac{C_4}{\gamma_1} \leq -\log \frac{C_4}{\gamma_2} \leq 0$. Then leaves us $\KLastar + C_1 \leq 0$ in order to make~\eqref{eq:a15} satisfied. Then we can relax~\eqref{eq:a15} into 
\begin{equation}\small
    \KLastar + C_1 \leq -\log \frac{C_4}{\gamma_1},
\end{equation}
which gives
\begin{equation}\small
\label{eq:bound_1}
    C_2 e^{-\KLastar} \geq \frac{C_5}{\gamma_1},
\end{equation}
where $C_5 = C_4 C_2 e^{C_1}$. This bounds the value of $\KLastar$.

Moreover, from~\eqref{eq:lie_in_region} and Assumption~\ref{assumption:2} we have
\begin{equation}\small
    f_l(e^{-\KLahat}) \leq f_u(e^{-\KLastar}) \leq 
    -\log \frac{C_4}{\gamma_1},
\end{equation}
which gives
\begin{equation}\small
\label{eq:a19}
    f_l(e^{-\KLahat}) = 
    \frac{C_2 e^{-\KLahat}}{C_2 e^{-\KLahat} + Z(\Sbar; \ahat)} (\KLahat + C_1) - \frac{Z(\Sbar; \ahat)}{C_2 e^{-\KLahat} + Z(\Sbar; \ahat)} \log \frac{C_4}{Z(\Sbar; \ahat)}
    \leq 
    -\log \frac{C_4}{\gamma_1}.
\end{equation}
Since $Z(\Sbar; \ahat) \geq \gamma_1$, we can relax~\eqref{eq:a19} into
\begin{equation}\small
    \frac{C_2 e^{-\KLahat}}{C_2 e^{-\KLahat} + Z(\Sbar; \ahat)} (\KLahat + C_1) - \frac{Z(\Sbar; \ahat)}{C_2 e^{-\KLahat} + Z(\Sbar; \ahat)} \log \frac{C_4}{Z(\Sbar; \ahat)}
    \leq 
    -\log \frac{C_4}{Z(\Sbar; \ahat)},
\end{equation}
which can be simplified into
\begin{equation}\small
    \KLahat + C_1 \leq -\log \frac{C_4}{Z(\Sbar; \ahat)}
    \leq -\log \frac{C_4}{\gamma_2},
\end{equation}
which means
\begin{equation}\small
\label{eq:bound_2}
    C_2 e^{-\KLahat} \geq \frac{C_5}{\gamma_2}.
\end{equation}
This bounds the value of $\KLahat$.

Now we build the connection between $\KLahat$ and $\KLastar$. Since $f_l(e^{-\KLahat}) \leq f_u(e^{-\KLastar})$, we have
\begin{equation}\small
\label{eq:a23}
\begin{split}
    & \quad \ \ \frac{C_2 e^{-\KLahat}}{C_2 e^{-\KLahat} + Z(\Sbar; \ahat)} (\KLahat + C_1) - \frac{Z(\Sbar; \ahat)}{C_2 e^{-\KLahat} + Z(\Sbar; \ahat)} \log \frac{C_4}{Z(\Sbar; \ahat)} \\
    & \leq 
    \frac{C_2 e^{-\KLastar}}{C_2 e^{-\KLastar} + Z(\Sbar; \astar)} (\KLastar + C_1) + \frac{Z(\Sbar; \astar)}{C_2 e^{-\KLastar} + Z(\Sbar; \astar)} \log \frac{C_3}{Z(\Sbar; \astar)}.
\end{split}
\end{equation}
Since $\KLahat + C_1 \leq -\log \frac{C_4}{\gamma_2} \leq 0$, $\KLastar \geq 0$, and also with~\eqref{eq:bound_1} and~\eqref{eq:bound_2}, we can relax~\eqref{eq:a23} into
\begin{equation}\small
\begin{split}
    & \quad \ \ \KLahat + C_1 - \frac{Z(\Sbar; \ahat)}{C_5 / \gamma_2} \log \frac{C_4}{Z(\Sbar; \ahat)} \\
    & \leq 
    \KLastar + \frac{C_2 e^{-\KLastar}}{C_2 e^{-\KLastar} + Z(\Sbar; \astar)} C_1 + \frac{Z(\Sbar; \astar)}{C_5 / \gamma_1} \log \frac{C_3}{Z(\Sbar; \astar)},
\end{split}
\end{equation}
which gives
\begin{equation}\small
    \KLahat
    \leq 
    \KLastar - \frac{Z(\Sbar; \astar)}{C_2 e^{-\KLastar} + Z(\Sbar; \astar)} C_1 + \frac{Z(\Sbar; \ahat)}{C_5 / \gamma_2} \log \frac{C_4}{Z(\Sbar; \ahat)}  + \frac{Z(\Sbar; \astar)}{C_5 / \gamma_1} \log \frac{C_3}{Z(\Sbar; \astar)}.
\end{equation}
Since $Z(\Sbar; \ahat) \leq \gamma_2 \leq \frac{C_4}{e}$, we have $Z(\Sbar; \ahat) \log \frac{C_4}{Z(\Sbar; \ahat)} \leq \gamma_2 \log \frac{C_4}{\gamma_2}$. 
Similarly, we have $Z(\Sbar; \astar) \log \frac{C_3}{Z(\Sbar; \astar)} \leq \gamma_2 \log \frac{C_3}{\gamma_2}$. Then we have
\begin{equation}
    \KLahat
    \leq 
    \KLastar - \frac{Z(\Sbar; \astar)}{C_2 e^{-\KLastar} + Z(\Sbar; \astar)} C_1 + \frac{\gamma_2^2}{C_5} \log \frac{C_4}{\gamma_2}  + \frac{\gamma_2^2}{C_5} \log \frac{C_3}{\gamma_2}.
\end{equation}
Since $C_1 \leq 0$, we can get
\begin{equation}\small
    \KLahat
    \leq 
    \KLastar + \frac{\gamma_2^2}{C_5} (-C_1) + \frac{\gamma_2^2}{C_5} \log \frac{C_4}{\gamma_2}  + \frac{\gamma_2^2}{C_5} \log \frac{C_3}{\gamma_2},
\end{equation}
which gives
\begin{equation}\small
    \KLahat
    \leq 
    \KLastar + \frac{2\gamma_2^2}{C_5} \log \frac{C_6}{\gamma_2},
\end{equation}
where $C_6 = \sqrt{C_3 C_4 e^{-C_1}}$.

\end{proof}

\begin{remark}
From Theorem~\ref{thrm:full} we see that $\KLahat$ is close to $\KLastar$ as long as $\gamma_2$ is small. One may notice that $\gamma_2$ cannot be arbitrarily small because from~\eqref{eq:bound_2} we have
\begin{equation}\small                 \frac{C_5}{\gamma_2} \leq C_2 e^{-\KLahat} \leq C_2,
\end{equation}
which means 
\begin{equation}\small
    \gamma_2 \geq \frac{C_5}{C_2} = C_4 e^{C_1}. 
\end{equation}
However, we can safely assume that
\begin{equation}\small
    C_1 = \log \frac{p^m(\theta^\ast)}{p^\ast (\theta^\ast)} \ll 0
\end{equation}
since $p^\ast$ is much more informative than $p^m$, especially when labeled data for the main task is scarce. This means $\gamma_2$ can be extremely small as long as $C_1$ is small, which makes $\KLahat$ close to $\KLastar$. Similarly, Assumption~\ref{assumption:2} can easily hold as long as $C_1$ is small.

\end{remark}


\subsection{Sampling through Langevin Dynamics (P2)}

In \textbf{Samples (P2)} we use Langevin dynamics~\cite{neal2011mcmc,welling2011bayesian} to sample from the distribution $p^J$. Concretely, at each iteration, we update $\theta$ by
\begin{equation}
    \theta_{t+1} = \theta_t - \epsilon_t \nabla \mathcal{L}(\theta_t) + \eta_t,
\end{equation}
where $\mathcal{L}(\theta) \propto -\log p^J(\theta)$ is the joint loss, and $\eta_t \sim N(0, 2\epsilon_t)$ is a Gaussian noise. In this way, $\theta_t$ converges to samples from $p^J$, which can be used to estimate our optimization objective. However, since we normally use a mini-batch estimator $\hat{\mathcal{L}}(\theta)$ to approximate $\mathcal{L}(\theta)$, this may introduce additional noise other than $\eta_t$, which may make the sampling procedure inaccurate. In~\cite{welling2011bayesian} it is proposed to anneal the learning rate to zero so that the gradient stochasticity is dominated by the injected noise, thus alleviating the impact of mini-batch estimator. However we find in practice that the gradient noise is negligible compared to the injected noise (Table~\ref{tab:noise}). Therefore, we ignore the gradient noise and directly inject the noise $\eta_t$ into the updating step.

\begin{table}[h]
  \caption{Standard deviation of different types of noise. We find that the gradient noise is negligible compared to the injected noise.}
  \label{tab:noise}
  \vspace{1em}
  \centering
  \begin{tabular}{lc}
    \toprule
      & Standard deviation\\
    \midrule
    Gradient Noise & $\sim 10^{-6}$ \\
    Injected Noise & $\sim 10^{-3}$ \\
    \bottomrule
  \end{tabular}
\end{table}


\subsection{Score Function and Fisher Divergence (P3)}

In \textbf{Partition Function (P3)} we propose to minimize
\begin{equation}\small
\label{eq:final_objective}
    \min_{\a} E_{\theta \sim p^J} \norm{\nabla \log p(\mathcal{T}_m | \theta) - \nabla \log p_{\a} (\theta)}_2^2
\end{equation}
as our final objective. Notice that
\begin{equation}\small
\begin{split}
    & \min_{\a} E_{\theta \sim p^J} \norm{\nabla \log p(\mathcal{T}_m | \theta) - \nabla \log p_{\a} (\theta)}_2^2 \\
    \Leftrightarrow \ \ & \min_{\a} E_{\theta \sim p^J} \norm{\nabla \log p^m(\theta) - \nabla \log p_{\a} (\theta)}_2^2 \\
    \Leftrightarrow \ \ & \min_{\a} E_{\theta \sim p^J} \norm{\nabla \log (p^m(\theta) \cdot p_{\a}(\theta)) - 2 \cdot \nabla \log p_{\a} (\theta)}_2^2 \\
    \Leftrightarrow \ \ & \min_{\a} E_{\theta \sim p^J} \norm{\nabla \log p^J(\theta) - \nabla \log p^2_{\a} (\theta)}_2^2 \\
    \Leftrightarrow \ \ & \min_{\a} F\infdivx{p^J(\theta)}{\frac{1}{Z^\prime (\a)} p^2_{\a}(\theta)},
\end{split}
\end{equation}
where $F\infdivx{p(\theta)}{q(\theta)} = E_{\theta \sim p} \norm{\nabla \log p(\theta) - \nabla \log q(\theta)}^2_2$ is the \emph{Fisher divergence}, and $Z^\prime (\a) = \int p^2_{\a} (\theta) d\theta$ is the normalization term. This means, by optimizing~\eqref{eq:final_objective}, we are actually minimizing the Fisher divergence between $p^J (\theta)$ and $\frac{1}{Z^\prime (\a)} p^2_{\a}(\theta)$. As pointed by~\cite{hu2018stein,liu2016kernelized}, Fisher divergence is stronger than KL divergence, which means  by minimizing $F\infdivx{p^J(\theta)}{\frac{1}{Z^\prime (\a)} p^2_{\a}(\theta)}$, the KL divergence $D_{KL}\infdivx{p^J(\theta)}{\frac{1}{Z^\prime (\a)} p^2_{\a}(\theta)}$ is also bounded near the optimum up to a small error. 

Therefore, optimizing~\eqref{eq:final_objective} is equivalent to minimizing $D_{KL}\infdivx{p^J(\theta)}{\frac{1}{Z^\prime (\a)} p^2_{\a}(\theta)}$. Notice that
\begin{equation}\small
\begin{split}
    & \min_{\a} D_{KL}\infdivx{p^J(\theta)}{\frac{1}{Z^\prime (\a)} p^2_{\a}(\theta)} \\
    \Leftrightarrow \ \ & \min_{\a} \int p^J (\theta) \log \frac{p^J (\theta)}{\frac{1}{Z^\prime (\a)} p^2_{\a}(\theta)} d\theta\\
    \Leftrightarrow \ \ & \min_{\a} \int p^J (\theta) \log \frac{\frac{1}{Z(\a)} p^m(\theta) p_{\a} (\theta)}{\frac{1}{Z^\prime (\a)} p^2_{\a}(\theta)} d\theta\\
    \Leftrightarrow \ \ & \min_{\a} \int p^J (\theta) \log \frac{ p^m(\theta)}{p_{\a}(\theta)} d\theta + \log \frac{Z^\prime (\a)}{Z(\a)}\\
    \Leftrightarrow \ \ & \min_{\a} \int p^J (\theta) \log \frac{ p^m(\theta)}{p_{\a}(\theta)} d\theta + \log \frac{\int p^2_{\a} (\theta) d\theta}{\int p^m(\theta) p_{\a}(\theta) d\theta}\\
\end{split}
\end{equation}
is different from~\eqref{eq:objective_approx} only on the $\log \frac{\int p^2_{\a} (\theta) d\theta}{\int p^m(\theta) p_{\a}(\theta) d\theta}$ term. To analyze the impact of this additional term, we assume that the likelihood function of each auxiliary task is a Gaussian, \ie, $p(\mathcal{T}_{a_k} | \theta) \propto N(\theta | \theta_k, \bm{\Sigma})$, with mean $\theta_k$ and covariance $\bm{\Sigma}$. Then we have $p_{\a} (\theta) = N(\theta | \sum_k \alpha_k \theta_k / K, \bm{\Sigma} / K)$ (note that $\sum_k \alpha_k = K$). In this case $\int p^2_{\a} (\theta) d\theta$ only depends on $\bm{\Sigma}$ and is invariant to $\a$. Thus optimizing~\eqref{eq:final_objective} is equivalent to
\begin{equation}\small
\label{eq:a35}
\begin{split}
    & \min_{\a} D_{KL}\infdivx{p^J(\theta)}{\frac{1}{Z^\prime (\a)} p^2_{\a}(\theta)} \\
    \Leftrightarrow \ \ & \min_{\a} \int p^J (\theta) \log \frac{ p^m(\theta)}{p_{\a}(\theta)} d\theta + \log \frac{\int p^2_{\a} (\theta) d\theta}{\int p^m(\theta) p_{\a}(\theta) d\theta} \\
    \Leftrightarrow \ \ & \min_{\a} \int p^J (\theta) \log \frac{ p^m(\theta)}{p_{\a}(\theta)} d\theta - \log \int p^m(\theta) p_{\a}(\theta) d\theta.
\end{split}
\end{equation}
Denote the optimal solution for~\eqref{eq:a35} by $\adagger$. Then we can build the connection between $\adagger$ and $\ahat$ by
\begin{equation}\small
    \int p^J (\theta) \log \frac{ p^m(\theta)}{p_{\adagger}(\theta)} d\theta - \log \int p^m(\theta) p_{\adagger}(\theta) d\theta 
    \leq 
    \int p^J (\theta) \log \frac{ p^m(\theta)}{p_{\ahat}(\theta)} d\theta - \log \int p^m(\theta) p_{\ahat}(\theta) d\theta.
\end{equation}
Since $\ahat$ minimizes $\int p^J (\theta) \log \frac{ p^m(\theta)}{p_{\a}(\theta)} d\theta$, which means $\int p^J (\theta) \log \frac{ p^m(\theta)}{p_{\ahat}(\theta)} d\theta \leq 
\int p^J (\theta) \log \frac{ p^m(\theta)}{p_{\adagger}(\theta)} d\theta$, we can get
\begin{equation}\small
    - \log \int p^m(\theta) p_{\adagger}(\theta) d\theta 
    \leq 
    - \log \int p^m(\theta) p_{\ahat}(\theta) d\theta,
\end{equation}
or 
\begin{equation}\small
    \int p^m(\theta) p_{\adagger}(\theta) d\theta 
    \geq 
    \int p^m(\theta) p_{\ahat}(\theta) d\theta,
\end{equation}
which gives
\begin{equation}\small
    \Sint p^m(\theta) p_{\adagger}(\theta) d\theta
    + \Sbarint p^m(\theta) p_{\adagger}(\theta) d\theta
    \geq 
    \Sint p^m(\theta) p_{\ahat}(\theta) d\theta 
    + \Sbarint p^m(\theta) p_{\ahat}(\theta) d\theta.
\end{equation}
Then we have
\begin{equation}\small
\begin{split}
    \Sint p^m(\theta) p_{\adagger}(\theta) d\theta
    &\geq 
    \Sint p^m(\theta) p_{\ahat}(\theta) d\theta 
    + \Sbarint p^m(\theta) p_{\ahat}(\theta) d\theta
    - \Sbarint p^m(\theta) p_{\adagger}(\theta) d\theta \\
    & \geq \Sint p^m(\theta) p_{\ahat}(\theta) d\theta - (\gamma_2 - \gamma_1).
\end{split}
\end{equation}
From Assumption~\ref{assumption:1} we have
\begin{equation}\small
    \frac{p^m(\theta^\ast) p_{\adagger}(\theta^\ast)}{p^\ast(\theta^\ast)}
    \geq \frac{p^m(\theta^\ast) p_{\ahat}(\theta^\ast)}{p^\ast(\theta^\ast)} - (\gamma_2 - \gamma_1),
\end{equation}
which gives
\begin{equation}\small
    \KLadagger = -\log \frac{p_{\adagger}(\theta^\ast)}{p^\ast(\theta^\ast)}
    \leq -\log (\frac{p_{\ahat}(\theta^\ast)}{p^\ast(\theta^\ast)} - \frac{\gamma_2 - \gamma_1}{p^m(\theta^\ast)})
    \leq -\log \frac{p_{\ahat}(\theta^\ast)}{p^\ast(\theta^\ast)} + \frac{\gamma_2 - \gamma_1}{p^m(\theta^\ast)},
\end{equation}
or 
\begin{equation}\small
    \KLadagger \leq \KLahat + \frac{\gamma_2}{C_2}.
\end{equation}
After combining with Theorem~\ref{thrm:full}, we have
\begin{equation}\small
    \KLadagger \leq \KLastar + \frac{2\gamma_2^2}{C_5} \log \frac{C_6}{\gamma_2} + \frac{\gamma_2}{C_2}.
\end{equation}
This means by optimizing our final objective~\eqref{eq:final_objective}, the KL divergence $\KLadagger$ is also bounded near the optimal value, which provides a theoretical justification of our algorithm.


\subsection{Tips for Practitioners}
\label{appendix_tips}

In Section~\ref{sec:algorithm}, we propose a two-stage algorithm, where we update the task weights with Langevin dynamics in the first stage, and then udpate the model with fixed task weights in the second stage. However, we find in practice that we can also find the similar task weights if we turn off the Langevin dynamics and directly sample from regular SGD. Therefore, we can further simplify the algorithm by removing the Langevin dynamics and merge the two stage, \ie, update task weights and model parameters at the same time until convergence. This simplified version is summarized in Algorithm~\ref{alg_simplified}. 

\begin{algorithm}[h]
\caption{ARML (simplified version)}
\label{alg_simplified}
\begin{algorithmic}

\STATE \textbf{Input:} main task data $\mathcal{T}_m$, auxiliary task data $\mathcal{T}_{a_k}$, initial parameter $\theta_0$, initial task weights $\bm{\alpha}$
\STATE \textbf{Parameters:} learning rate of $t$-th iteration $\epsilon_t$, learning rate for task weights $\beta$
\\~\\

\FOR{iteration $t = 1$ to $T$}
    \STATE $\theta_t \gets \theta_{t-1} - \epsilon_t (- \nabla \log p(\mathcal{T}_m | \theta_{t-1}) - \sum_{k=1}^K \alpha_k \nabla \log p(\mathcal{T}_{a_k} | \theta_{t-1})) + \eta_t$
    \STATE $\bm{\alpha} \gets \bm{\alpha} - \beta \nabla_{\bm{\alpha}} \norm{\nabla \log p(\mathcal{T}_m | \theta_t) - \sum_{k=1}^K \alpha_k \nabla \log p(\mathcal{T}_{a_k} | \theta_t)}^2_2$ 
    \STATE Project $\bm{\alpha}$ back into $\mathcal{A}$
\ENDFOR

\end{algorithmic}
\end{algorithm}


\section{Experimental Settings}

For all results, we repeat experiments for three times and report the average performance. Error bars are reported with CI=95\%. In our algorithm, the only hyperparameter is the learning rate $\beta$ of task weights. Specifically, we find the results insensitive to the choice of $\beta$. Therefore, we randomly choose $\beta \in [0.0005, 0.05]$, for a trade-off between steady training and fast convergence. We use PyTorch~\cite{paszke2019pytorch} for implementation.

\subsection{Semi-supervised Learning}

For semi-supervised learning, we use two datasets, CIFAR10~\cite{krizhevsky2009learning} and SVHN~\cite{netzer2011reading}. For CIFAR10, we follow the standard train/validation split, with 45000 images for training and 5000 for validation. Only 4000 out of 45000 training images are labeled. For SVHN, we use the standard train/validation split with 65932 images for training and 7325 for validation. Only 1000 out of 65392 images are labeled. Both datasets can be downloaded from the official PyTorch torchvision library (\url{https://pytorch.org/docs/stable/torchvision/index.html}). Following~\cite{oliver2018realistic}, we use WRN-28-2 as our backbone, \ie, ResNet~\cite{he2016deep} with depth 28 and width 2, including batch normalization~\cite{ioffe2015batch} and leaky ReLU~\cite{maas2013rectifier}. We train our model for 200000 iterations, using Adam~\cite{kingma2014adam} optimizer with batch size of 256 and learning rate of 0.005 in first 160000 iterations and 0.001 for the rest iterations.

For implementation of self-supervised semi-supervised learning (S4L), we follow the settings in the original paper~\cite{zhai2019s4l}. Note that we make two differences from~\cite{zhai2019s4l}: (i) for steadier training, we use the model with time-averaged parameters~\cite{tarvainen2017mean} to extract feature of the original image, (ii) To avoid over-sampling of negative samples in triplet-loss~\cite{arora2019theoretical}, we only put a loss on the cosine similarity between original feature and augmented feature.

\subsection{Multi-label Classification}

For multi-label classification, we use CelebA~\cite{liu2015deep} as our dataset. It contains 200K face images, each labeled with 40 binary attributes. We cast this into a multi-label classification problem, where we randomly choose one attribute as the main classification task, and other 39 as auxiliary tasks. We randomly choose 1\% images as labeled images for main task. The dataset is available at \url{http://mmlab.ie.cuhk.edu.hk/projects/CelebA.html}. We use ResNet18~\cite{he2016deep} as our backbone. We train the model for 90 epochs using SGD solver with batch size of 256 and scheduled learning rate of 0.1 initially and 0.1$\times$ shrinked every 30 epochs.

\subsection{Domain Generalization}

Following the literature~\cite{carlucci2019domain,asadi2019towards}, we use PACS~\cite{li2017deeper} as our dataset for domain generalization. PACS consists of four domains (photo, art painting,
cartoon and sketch), each containing 7 categories (dog, elephant, giraffe, guitar, horse, house and person). The dataset is
created by intersecting classes in Caltech-256~\cite{griffin2007caltech}, Sketchy~\cite{sangkloy2016sketchy}, TU-Berlin~\cite{eitz2012humans} and Google Images. Dataset can be downloaded from \url{http://sketchx.eecs.qmul.ac.uk/}. Following
protocol in~\cite{li2017deeper}, we split the images from training domains to 9 (train) : 1 (val) and test on the whole target
domain. We use a simple data augmentation protocol by randomly cropping the images to 80-100\% of original sizes and
randomly apply horizontal flipping. We use ResNet18~\cite{he2016deep} as our backbone. Models are trained with SGD solver, 100 epochs, batch size 128.
Learning rate is set to 0.001 and shrinked down to 0.0001 after 80 epochs.

\end{document}